\def\munderbar#1{\underline{\sbox\tw@{$#1$}\dp\tw@\z@\box\tw@}}
\newcommand{\norm}[1]{\left\lVert#1\right\rVert}
\newtheorem{theorem}{Theorem}
\newtheorem{problem}{Problem}
\newtheorem{remark}{Remark}
\newtheorem{definition}{Definition}
\newcommand{\R}{\mathbb{R}}
\newcommand{\Z}{\mathbb{Z}}
\newcommand{\s}{\mathbf{s}}
\begin{document}
%
% paper title
% Titles are generally capitalized except for words such as a, an, and, as,
% at, but, by, for, in, nor, of, on, or, the, to and up, which are usually
% not capitalized unless they are the first or last word of the title.
% Linebreaks \\ can be used within to get better formatting as desired.
% Do not put math or special symbols in the title.
\title{A Finite-Sampling, Operational Domain Specific, and Provably Unbiased Connected and Automated Vehicle Safety Metric}
%
%
% author names and IEEE memberships
% note positions of commas and nonbreaking spaces ( ~ ) LaTeX will not break
% a structure at a ~ so this keeps an author's name from being broken across
% two lines.
% use \thanks{} to gain access to the first footnote area
% a separate \thanks must be used for each paragraph as LaTeX2e's \thanks
% was not built to handle multiple paragraphs
%

\author{Bowen Weng, Linda Capito,  Umit Ozguner, \textit{IEEE Life Fellow}, Keith Redmill, \textit{IEEE Senior Member}% <-this % stops a space
% \thanks{*This work was not supported by any organization}% <-this % stops a space
\thanks{Bowen Weng, Linda Capito, Umit Ozguner, and Keith Redmill are with Department of Electrical and Computer Engineering at Ohio State University, OH, USA.}
}

\maketitle

% As a general rule, do not put math, special symbols or citations
% in the abstract or keywords.
\begin{abstract}
A connected and automated vehicle safety metric determines the performance of a subject vehicle (SV) by analyzing the data involving the interactions among the SV and other dynamic road users and environmental features. When the data set contains only a finite set of samples collected from the naturalistic mixed multi-modal traffic driving environment, a metric is expected to generalize the safety assessment outcome from the observed finite samples to the unobserved cases by specifying in what domain the SV is expected to be safe and how safe the SV is, statistically, in that domain. However, to the best of our knowledge, none of the existing safety metrics is able to justify the above properties with an operational domain specific, guaranteed complete, and provably unbiased safety evaluation outcome. In this paper, we propose a novel safety metric that involves the $\alpha$-shape and the $\epsilon$-almost robustly forward invariant set to characterize the SV's almost safe operable domain and the probability for the SV to remain inside the safe domain indefinitely, respectively. The empirical performance of the proposed method is demonstrated in several different operational design domains through a series of cases covering a variety of fidelity levels (real-world and simulators), driving environments (highway, urban, and intersections), road users (car, truck, and pedestrian), and SV driving behaviors (human driver and self driving algorithms). 
\end{abstract}

% Note that keywords are not normally used for peerreview papers.
\begin{IEEEkeywords}
Safety Metric, Invariant Set, Connected and Automated Vehicle, Operational Design Domain
\end{IEEEkeywords}

% For peer review papers, you can put extra information on the cover
% page as needed:
% \ifCLASSOPTIONpeerreview
% \begin{center} \bfseries EDICS Category: 3-BBND \end{center}
% \fi
%
% For peerreview papers, this IEEEtran command inserts a page break and
% creates the second title. It will be ignored for other modes.
\IEEEpeerreviewmaketitle

\section{Introduction}\label{sec:introduction}
\IEEEPARstart{V}{ehicles} equipped with Advanced Driver Assist Systems (ADAS) or Automated Driving Systems (ADS), including the full spectrum of Connected and Automated Vehicles (CAVs), operate in a mixed multi-modal traffic environment with various traffic participants (e.g., pedestrians, cyclists, and different types of vehicles) and environmental disturbances (e.g., road gradients, surface friction, and weather conditions). In general, to ensure the safe performance of a Subject Vehicle (SV) or a fleet of SVs (e.g., a group of CAVs) in the real-world mixed traffic driving environment (also referred to as the naturalistic driving environment in the literature~\cite{feng2020testing} and the ``nominal driving environment" in the remainder of this paper), one typically follows a two-step procedure with testing and analysis. First, the \textit{testing} procedure deploys the SV (or a fleet of SVs) in the environment and acquires the traffic interactions and other observable infrastructure information~\cite{altekar2021infrastructure} near each SV with a certain data acquisition system. This creates a set of finite observations sampled from the nominal driving environment. Note that the environment can consist of simulated scenes, real-world on-road operation, or controlled testing and proving grounds. Second, the \textit{analysis} procedure summarizes the safety performance from the finite sampled observations and seeks to generalize the understanding, intuitively or provably, to the non-sampled unobserved cases. In this paper, we only focus on the offline analysis case, which involves presenting previously collected data, as it stands, to one or more specific safety metrics.
%which involves a certain safety metric, and the collected data of finite samples is %presented as it is to the metric.

Let's start from a toy example of one observing a certain SV operating safely (without collisions, human driver engagements, or breaking traffic rules) navigating from the Empire State Building to Times Square (both are attractions in New York City, United States) for one mile at 6 P.M. on a weekday through a crowd of vehicles, pedestrians, cyclists, and an intersection with traffic lights. A safety measure then seeks to infer the SV's overall safety performance in the mixed-traffic driving environment from the data collected during the one-mile observation. 

The first class of measures are known as leading measures as they ``reflect performance, activity, and prevention"~\cite{fraade2018measuring}, such as infractions (i.e., noncriminal violations of state and local traffic law)~\cite{censi2019liability} and disengagements~\cite{favaro2018autonomous}. In general, it is expected that the leading measure outcomes from the one-mile trip imply a certain safety property, yet such an implication is mostly intuitive (e.g., the observed $0\%$ engagement rate within one mile does not necessarily hold for the rest of a longer trip). 

On the other hand, the lagging measures are primarily interested in safety outcomes or harm~\cite{fraade2018measuring}. They can be further classified as observed failures, predictive failures, and inferred failures. As a collision is the most well-adopted failure event in the literature, it is considered interchangeable with failure for the remainder of this paper.

The observed failures share the same spirit with many aforementioned leading measures. For example, the $0\%$ observed collision rate in the one-mile trip does not necessarily hold as the vehicle operation proceeds. It can also be expanded from the scalar value measure to a more complex group of collision ratings~\cite{schwall2020waymo}, yet the above mentioned problem still remains. Second, the predictive failure is often derived by asserting surrogate models and assumptions~\cite{bowen2020presentation, weng2021model}. Hence some lagging safety measures are also referred to as surrogate safety measures in the literature~\cite{wang2021review}. One well-adopted assumption and the surrogate model is the steady-state assumption (all road users maintain the current velocity and heading) and the linear double integrator dynamics, leading to a series of classic safety measures including time-to-collision (TTC)~\cite{lee1976theory} and the minimum safe distance (MSD) based variants~\cite{wishart2020driving}. Some recently propose metrics, developed as more complex dynamic and behavioral models are considered, include the Responsibility Sensitive Safety (RSS)~\cite{shalev2017formal} based method, Instantaneous Safety Metric (ISM)~\cite{every2017novel}, criticality metric~\cite{junietz2018criticality}, and Model Predictive Instantaneous Safety Metric (MPrISM)~\cite{weng2020model}, which all belong to a class of model predictive safety measures~\cite{weng2021model}. Note that many of the lagging measures generalize the finite observations to the non-sampled cases to some extent, but the generalization relies heavily on asserted surrogate models and assumptions which are mostly invalid in the real-world mixed traffic driving environment~\cite{bowen2020presentation, weng2021model}. 

Finally, in contrast to the predictive failure based lagging measures that generalize the safety assessment to an ``imaginary" domain that does not necessarily align with the nominal driving environment~\cite{bowen2020presentation, weng2021model}, the statistically inferred failure rate estimate is an unbiased safety assessment generalization from the observed samples to the nominal driving environment. One representative method in this category comes from Fraade et al.~\cite{fraade2018measuring} using the Monte-Carlo sampling approach to provide the finite-sampling safety assurance by inferring the SV's fatality rate estimate from consecutively operating for a certain number of miles safely. If applied to the aforementioned one-mile trip example, with confidence level 90\%, the SV has a fatality rate of 90 million fatalities per 100 million miles. Despite the 90\% risk being rigorously provable, note that the safety measure outcome is essentially invariant from the mixed traffic environment as one still obtains the same values if the vehicle safely operates on the same route on empty streets at 3 A.M. (i.e., no other traffic objects are present). Note that the importance sampling based technique~\cite{ding2011toward} has been shown capable of improving the sampling efficiency of the Monte-Carlo sampling methods. However, the accuracy of the estimated failure rate relies heavily on the accuracy of the estimated importance function, which is not a provable condition in general. 

Another line of research on formal safety analysis relies on a model-based approach where one first approximates a certain probabilistic model, parametric or non-parametric, from the observed data and then derives the risk rate estimate~\cite{aasljung2019probabilistic}, information gain~\cite{collin2021plane}, and other safety related properties~\cite{hejase2020methodology} using the obtained model. This shares a similar problem with the aforementioned importance sampling based methods as the safety outcome estimate is unbiased only if the approximated model is also unbiased with analytically justifiable variance, which remains as an open challenge to date.

To a certain extent, existing efforts seek to establish a CAV safety measurement that is monotonic w.r.t. the SV's safety performance (e.g., a lower TTC value indicates a more unsafe SV behavior than a higher TTC value). This is generally true if other variables are controlled properly. One particularly important variable is the SV's operable domain. As we have discussed before, the leading measures fail to control the domain variables since the generalization is biased. The predictive failure based lagging measures also fail, for while the generalization is provably true in a certain predictive domain, it does not necessarily align with the nominal driving environment. Finally, for the statistically inferred collision rate~\cite{fraade2018measuring}, the operable domain is invariant as the required total mileage to claim a certain fatality rate with a given confidence level does not change as one moves from the lead-vehicle following domain to a more complex operable domain involving mixed-traffic interactions. Moreover, the particular SV driving behavior also partially affects its operable domain construction. As a result, the notion of \emph{one vehicle being safer than the other} is mostly problematic as it is essentially a multi-dimensional comparison. This will be demonstrated in detail through a series of examples in Section~\ref{sec:case_study}.

In summary, to make a competitive safety measurement for the SV that resolves the various mentioned problems of existing methods, the following two questions need to be jointly and rigorously addressed:
\begin{itemize}
    \item Q1: \textbf{Where} (in terms of the operable domain) will the vehicle be statistically safe within the nominal driving environment?
    \item Q2: Supplied with a certain operable domain, \textbf{how} safe will the vehicle be within the given domain?
\end{itemize}
To the best of our knowledge, there does not exist a safety metric that rigorously addresses the above two questions simultaneously. 

In this paper, we propose a novel safety metric using the $\alpha$-shape~\cite{akkiraju1995alpha}, a set of piece-wise linear segments that characterize the extent and shape of a finite set of points, and the $\epsilon$-almost invariant set~\cite{weng2021towards,weng2021formal}, an almost forward invariant set except for an arbitrarily small sub-set induced by the probability coefficient $\epsilon$. Given the driving data collected from a certain testing procedure, the proposed method first rearranges the data to formulate the \emph{Operational State Space} (OSS) of a multi-agent system that admits measurable states and other non-observable uncertainties. One then characterizes an \emph{Operational Design Domain} (ODD) as a subset of the formulated OSS that is ``almost" forward invariant for the multi-agent dynamics. As the characterized domain does not intersect with the set of failure events, the SV is also almost safe in the given domain except for an arbitrarily small subset with a prescribed confidence level. The main contribution of this paper is further summarized as follows.

\textbf{An operational domain specific safety indicator} The proposed method characterizes an operational domain specific set using $\alpha$-shape~\cite{akkiraju1995alpha} and other coverage properties, which formally answers question Q1. The effectiveness of the proposed methodology is empirically demonstrated through a group of challenging cases. The study not only includes the classic three-dimensional lead-vehicle following domain, but also considers the challenging vehicle-to-vehicle and vehicle-pedestrian interactions with up to a 17-dimensional state space.

\textbf{An unbiased safety indicator} The $\epsilon$-almost robustly forward invariant set~\cite{weng2021towards,weng2021formal} is a provably unbiased safety indicator that generalizes the observation from sampled driving data to the incompletely observed operable domain. In particular, given a certain confidence level, the probability coefficient $\epsilon$ answers question Q2 by provably quantifying the performance of the SV statistically within the constructed set. The process does not involve any asserted behavioral assumptions, distribution estimates, or model fitting.

\textbf{Empirical evaluation} The empirical performance of the proposed method is demonstrated in a series of cases covering a variety of fidelity levels (real-world and simulators), driving environments (highway, urban, and intersections), road users (car, truck, and pedestrian), and SV driving behaviors (human driver and self driving algorithms). 

\subsection{Constructions and Notation}

\textbf{Notation: } The set of real and positive real numbers are denoted by $\R$ and $\R_{>0}$, respectively. $\Z$ denotes the set of all positive integers and $\Z_N=\{1,\ldots,N\}$. The $\ell_{\infty}$-norm is denoted by $\norm{\cdot}$. $|\mathcal{X}|$ is the cardinality of the set $\mathcal{X}$, e.g., for a finite set $\mathcal{D}$, $|\mathcal{D}|$ denotes the total number of points in $\mathcal{D}$. Let $\partial \Phi$ be the boundary of the set $\Phi \subseteq \R^n$. Some commonly adopted acronyms are also adopted including i.i.d. (independent and identically distributed), w.r.t. (with respect to), and w.l.o.g. (without loss of generality). 

In the remainder of the paper, Section~\ref{sec:preliminaries} will present the preliminaries along with formulating the \emph{finite-sampling operable domain quantification problem}. Section~\ref{sec:main} introduces details of the proposed safety metric. The empirical performance of the proposed metric is demonstrated in Section~\ref{sec:case_study}. Section~\ref{sec:conclusion} summarizes the paper and discusses future work of interest.

\section{Preliminaries and Problem Formulation}\label{sec:preliminaries}

\subsection{Mixed-Traffic Environment Formulation}
\begin{figure*}[t]
    \centering
    \includegraphics[width=0.99\linewidth]{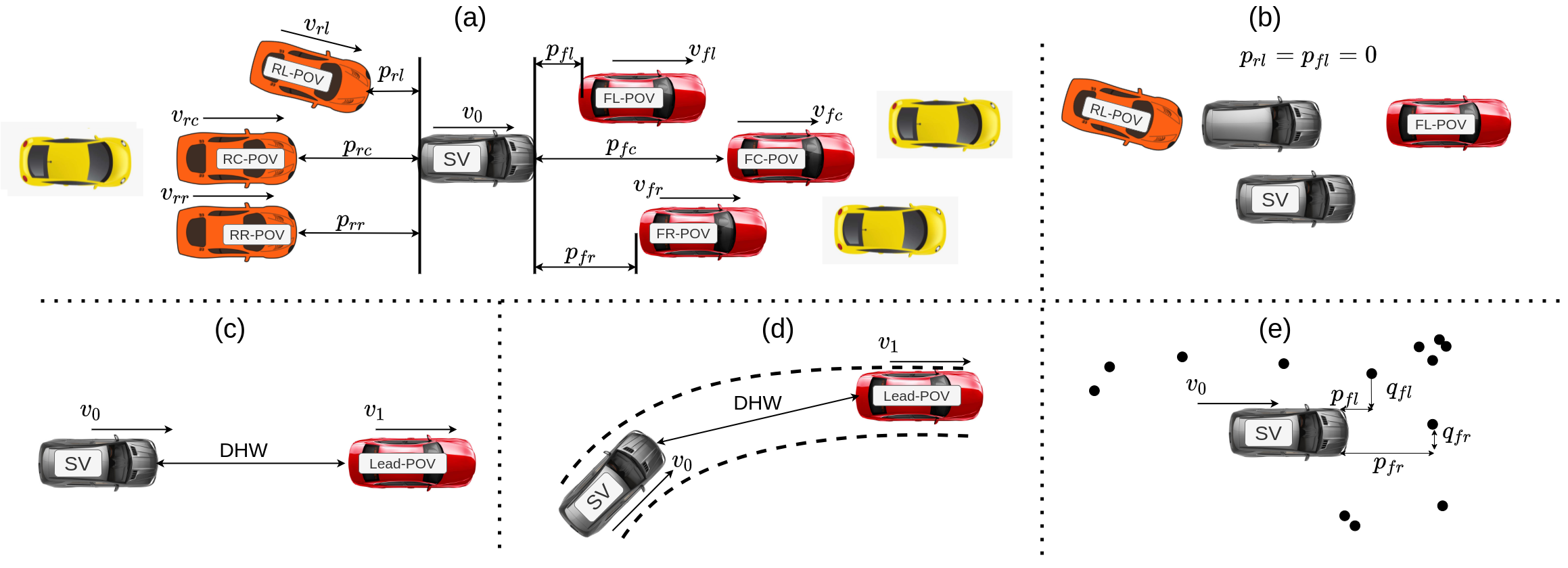}
    \caption{\small{Some illustrative examples of OSS specifications considered by this study: (a) a 13-dimensional multi-vehicle interactive state space of near-by vehicle-only traffics defined over the SV's local coordinates, (b) a special case to complement the ODD specifications in (a), (c) a 3-dimensional lead-vehicle following state space, (d) a generalization of (c) to non-straight road-segments, (e) a 5-dimensional vehicle-pedestrian interactive state space.}}
    \label{fig:model}
\vspace{-5mm}
\end{figure*}
Consider the mixed traffic environment as a time-variant heterogeneous multi-agent system of $k(t)$ agents at time $t$ where the $i$-th ($i\in\Z_{k(t)}$) agent admits the motion dynamics as 
\begin{equation}\label{eq:dyn}
    \s_i(t+1) = f_i(\s_i(t);\boldsymbol\omega_i(t))
\end{equation}
with state $\s_i \in \R^{n_i}$, disturbances and uncertainties $\boldsymbol\omega_i \in \R^{\omega_i}$, $f_i: \R^{n_i} \times \R^{\omega_i} \rightarrow \R^{n_i}$. Note that the agent is not limited to dynamic road users (e.g. vehicles, pedestrians, cyclists), but can also include other environmental features such as the traffic light color, stop sign position, weather condition, and road surface friction. Let the index $i=0$ denote the test SV. For a fleet of SVs, one can assign the index 0 to each SV iteratively for further analysis as the safety of each individual SV ensures the overall safety of the fleet. In general, the above system can be very complex as the real-world driving environment has very large $k$ and varies with respect to time.

The desired \emph{Operational Design Domain} (ODD) is thus introduced to specify the set within which the SV is expected to operate safely. Formal specifications of the ODD is further derived from an OSS with explicitly defined observable states $\s \in \mathcal{S} \subseteq \R^n$ and implicitly induced disturbances and uncertainties $\boldsymbol\omega \in \Omega \in \R^\omega$. Some examples of OSS specifications are presented in Fig.~\ref{fig:model}. 
This paper is primarily focused on three OSS specifications that are explained as follows.

\subsubsection{The lead-vehicle following domain}
This OSS characterizes the lead-vehicle following safety performance. It incorporates all instances from the on-road driving data with a preceding vehicle presented in the same lane with the SV. It is applicable for many ADAS features such as Automatic Emergency Braking (AEB) and Traffic-Jam Assist (TJA). The lead-vehicle following domain is also a commonly studied instance with other domain specifications incorporating time duration~\cite{arief2021deep} and assumed hybrid control modes~\cite{fan2017d}. In this paper, we consider a more general configuration than the aforementioned references, as the state specification $\mathcal{S}$ takes the speed of both vehicles ($v_0\in [0, v_{\max}]\subseteq \R_{\geq0}, v_1 \in [0, v_{\max}]\subseteq \R_{\geq0}$) and the bumper-to-bumper distance headway (DHW) ($p \in [0, p_{\max}]\subseteq \R_{\geq0}$) between the two vehicles as the states of interest. The specification is applicable for both straight road segments (Fig.~\ref{fig:model}(c)) and curved roads (Fig.~\ref{fig:model}(d)), i.e., the road curvature is considered to be part of $\Omega$, as are other factors such as road gradients, weather condition, and road surface friction. 

\subsubsection{The multi-vehicle interaction domain}
This OSS defines the SV's interaction with nearby vehicles. All the position states are represented with respect to the SV's local coordinates. The near-SV region is divided into 6 subregions: front-left (fl), front-center (fc), front-right (fr), rear-left (rl), rear-center (rc), and rear-right (rr). The left, the center, and the right regions are typically determined by the lane width. Within each region, the nearest vehicle is determined through the center-to-center $\ell_2$-norm distance against the SV. Two features of the nearest vehicles are selected by $\mathcal{S}$ including the bumper-to-bumper longitudinal distance clearance $p \in [p_{\min}, p_{\max}]\subseteq \R_{\geq0}$ against SV and the vehicle speed $v\in [0, v_{\max}]\subseteq \R_{\geq0}$. When presented with an alongside vehicle (i.e., part of the vehicles are overlapping longitudinally) on either side of the SV, the bumper-to-bumper distances are set to zero as shown in Fig.~\ref{fig:model}(b). Combined with the SV's speed $v_0$, we have a 13-dimensional state space, i.e., $\mathcal{S}\subset \R^{13}$. If a particular subregion is empty or if any of the states falls outside the domain of interest defined by $p_{\min}, p_{\max}, v_{\max}$ and other given bounds, a fixed low-risk state is assigned (e.g., if the front-center region is empty, we assign $p_{fc}=p_{\max}$ and $v_{fc}=v_0$). To have a valid state $\s\in\mathcal{S}$, at least one of the six subregions must remain non-empty with a vehicle satisfying the state bounds. The lateral distances are treated as uncertainties as each subregion is limited by the lane width, which already provides certain sideways localization information. Some other examples of disturbances and uncertainties include the presence of other dynamic road users, the road curvature, and different road infrastructures. A similar multi-vehicle configuration is also adopted by other studies for scenario extraction purposes~\cite{hauer2020clustering} and driver behavioral modeling~\cite{yan2021distributionally}.

\subsubsection{The vehicle-pedestrian interaction domain}
This OSS is primarily concerned with the SV interacting with pedestrians. Only pedestrians in front of the SV are involved in the specification due to responsibility oriented causes~\cite{shalev2017formal}. The front-left corner and the front-right corner of the SV are the reference points. Then, the nearest pedestrian to each reference point in terms of $\ell$2-norm distance is considered as the pedestrian of interest. For each pedestrian of interest, its longitudinal offset $p\in[0, p_{\max}]\subseteq \R_{\geq0}$ and lateral offset $1\in[0, q_{\max}]\subseteq \R_{\geq0}$ from the corresponding reference point are selected as part of the states in $\mathcal{S}$. Combined with the SV's velocity $v_0\in [0, v_{\max}]\subseteq \R_{\geq0}$, we have the 5-dimensional state space for the vehicle-pedestrian interaction domain.

Note that the above OSSs and possibly other variants can be further combined to formulate various mixed traffic operational environments. For example, combining the multi-vehicle interaction domain with the vehicle-pedestrian interaction domain, results in a $17$-dimensional state space. Moreover, the presented three OSSs are concrete examples that facilitate the metric proposal. In the general CAV applications, the OSS design could involve other explicit states such as the communication quality, road curvature, weather condition, and driver intention, to name a few. This requires additional capabilities in simulation or the collection of additional classes of data during real-world tests in order to observe the relevant states. This does not affect the execution or the theoretical properties of the proposed metric, but may lead to more extensive safety analysis outcomes of the SV driving algorithm.

\begin{remark}
The driving data studied in this paper can be collected from both on-road tests and scenario-based tests, as long as the data collection follows the nominal distribution of the mixed-traffic driving environment within which the SV is being tested. 
\end{remark}

W.l.o.g., let there be some states from the collected driving data consistent with the given $\mathcal{S}$. We then have the primary driving data $\mathcal{D}\subseteq \mathcal{S}$ which comprises finite observations that allow us to implement the safety analysis of the SV's performance in $\mathcal{S}$. Moreover, some of the states are consecutively collected in time w.r.t. the same SV, which is further referred to as a \emph{trajectory} $\tau \subseteq \mathcal{D}$. In this paper, we often extract state transition pairs from all trajectories in $\mathcal{D}$ as $\mathcal{TD}=\{\s_i, \s'_i\}_{i=1,\ldots,M} \subseteq \mathcal{S}^2$, where $\s_i, \s'_i$ are two consecutively collected states in time w.r.t. the same SV. Note that each state $\s \in \mathcal{S}$ also inherently admits a certain motion dynamics as
\begin{equation}\label{eq:s-dyn}
    \s'(t)=\s(t+1) = f(\s(t); \boldsymbol\omega(t)), \s \in \mathcal{S}, \boldsymbol\omega \in \Omega, f: \mathcal{S}\times \Omega \rightarrow \mathcal{S}.
\end{equation}

In this paper, our focus is to present a safety performance evaluation metric that identifies the real operable domain in a data-driven manner (from $\mathcal{D}$) and identifies its safety-related properties. This is formally presented as the \textit{finite-sampling operable domain quantification problems}, as we shall introduce in the following section.
 
\subsection{Finite-sampling Safety Assurance with Set Invariance}
Given a certain set $\Phi$, if one continuously observes sampled state transitions staying inside $\Phi$, then the set $\Phi$ is potentially forward invariant. To formally quantify such a statistical potential, we introduce the almost forward invariant set adapted from~\cite{weng2021towards} as follows.

\begin{definition}{[\textbf{$\epsilon$-Almost Robustly Forward Invariant Set}]}\label{def:almost-rfiv}
    Let $\epsilon \in (0,1)$, the set $\Phi \subseteq \mathcal{S}$ is $\epsilon$-almost robustly forward invariant for \eqref{eq:s-dyn} if
    \begin{equation}
        \forall \s \in \mathcal{S}, \forall \boldsymbol\omega \in \Omega, \mathbb{P}(\{f(\s, \boldsymbol\omega) \notin \mathcal{S}\}) \leq \epsilon.
    \end{equation}
\end{definition}

To further relate the above definition to the purpose of safety analysis, let $\mathcal{C} \subset \mathcal{S}$ be the set of unsafe states such as collisions. Then we have the following definition for the almost safe set.
\begin{definition}{[\textbf{$\epsilon$-Almost Safe Set}]}\label{def:almost-safe}
    Let $\epsilon \in (0,1)$, the SV is $\epsilon$-almost safe in $\Phi \subseteq \mathcal{S}$ if $\Phi \cap \mathcal{C}=\emptyset$ and $\Phi$ is $\epsilon$-almost robustly forward invariant for \eqref{eq:s-dyn} by Definition~\ref{def:almost-rfiv}.
\end{definition}

The problem of interest for this paper is than formally presented as follows.
\begin{problem}{[\textbf{The Finite-Sampling Operable Domain Quantification Problem}]}\label{prob}
    Given $\beta \in (0,1)$ and a group of observed states $\mathcal{D} \subseteq \mathcal{S}$, the finite-sampling operable domain quantification problem seeks an algorithm $\text{ALG}: \mathcal{S} \times (0,1) \rightarrow \mathcal{S} \times (0,1)$ that identifies a certain set $\Phi \subseteq \mathcal{S}$ and $\epsilon\in(0,1)$ such that with confidence level of at least $1-\beta$, the SV is $\epsilon$-almost safe in $\Phi$ by Definition~\ref{def:almost-safe}.
\end{problem}

The above problem formulation is fundamentally different from many of the existing CAV safety metrics as mentioned in Section~\ref{sec:introduction}. The desired set $\Phi$ is the specific operable domain within which the SV is expected to operate safely. The probability coefficient $\epsilon$ quantifies the statistical potential of the SV's safety performance in $\Phi$. The next section discusses details of the proposed algorithm that solves the aforementioned problem. 

\section{Finite-Sampling Operable Domain Quantification}\label{sec:main}
The proposed solution to Problem~\ref{prob} follows a two-step procedure in general including (i) set construction and (ii) set validation. The set construction step seeks to construct a certain set $\Phi$ from $\mathcal{D}$. Second, as one replays data in $\mathcal{D}$, one can then validate the almost forward invariance property of the constructed set $\Phi$ through consecutively observing transitions among states in $\Phi$. The derived $\epsilon$ also relies on the given confidence level defined in Problem~\ref{prob}. For the remainder of this section, we shall address the aforementioned two steps, respectively, in Section~\ref{subsec:construction} and Section~\ref{subsec:validation}. The complete algorithm is summarized in Section~\ref{subsec:alg}.

\subsection{Set Construction with $\alpha$-Shape and Coverage Measures}\label{subsec:construction}
For the safety evaluation purpose, the constructed set is expected to cover all potentially safe points. This is referred to as $\mathcal{D}_s$ and is obtained through Algorithm~\ref{alg:D_s}. A series of methods is then proposed to formally construct a set that characterizes the \emph{shape} and the \emph{coverage} information of $\mathcal{D}_s$.

\begin{algorithm}[H]
    \begin{algorithmic}[1]
    \State {\bf Input:} $\mathcal{D}\subseteq \mathcal{S}, \mathcal{C}$
    \State {\bf Initialize: } Empty graph $G_s=(\mathcal{D}_s, E_s), \mathcal{D}_s\subseteq\mathcal{S}, E_s\subseteq \mathcal{S}^2, i=1$
    \State {Collect all safe trajectories $\{\tau^s_i\}_{i\in\Z_s}, \tau^s_i \subseteq\mathcal{D}, \tau^s_i\!\cap\!\mathcal{C} =\emptyset$}
    \State {Collect all unsafe trajectories $\{\tau^c_i\}_{i\in\Z_c}, \tau^c_i \subseteq\mathcal{D}, \tau^c_i\!\cap\!\mathcal{C} \neq\emptyset$}
    \State {{\bf While} $i<s$:}
    \State {\ \ \ \ $j=1$}
    \State {\ \ \ \ {\bf While} $j<|\tau^s_i|-1$:}
    \State {\ \ \ \ \ \ \ \ $G_s$.\texttt{add}$((\tau^s_i[j],\tau^s_i[j+1]))$}
    \State {\ \ \ \ \ \ \ \ $j+=1$}
    \State {\ \ \ \ $i+=1$}
    \State {{\bf If} $\mathcal{D}\cap\mathcal{C}\neq \emptyset$}
    \State {\ \ \ \ $i=1$}
    \State {\ \ \ \ {\bf While} $i<c$:}
    \State {\ \ \ \ \ \ \ \ {\bf For} $\s$ in $\tau^c_i$ {\bf do}}
    \State {\ \ \ \ \ \ \ \ \ \ \ \ $R_s=$\texttt{Reachable}$(\s, G_s)$}
    \State {\ \ \ \ \ \ \ \ \ \ \ \ $G_s$.\texttt{remove}$(R_s)$}
    \State {\ \ \ \ \ \ \ \ {\bf End For}}
    \State {\ \ \ \ $i+=1$}
    \State {{\bf End If}}
    \State {\bf Output:} $\mathcal{D}_s$
    \end{algorithmic}
    \caption{Extract all potential safe states from $\mathcal{D}$} \label{alg:D_s}
\end{algorithm}

Note that \texttt{Reachable}$(\s, G_s)$ returns all vertices on the graph $G_s$ that connects, directly or indirectly, to the given point $\s$. In practice, this is achieved through a standard depth-first-search (DFS) routine. Moreover, \texttt{add}$()$ and \texttt{remove}$()$ are both notation functions where $G_s$.\texttt{add}$((\s,\s'))$ adds the edge $(\s,\s')$ to the graph $G_s$, and $G_s$.\texttt{remove}$(R_s)$ removes all vertices in $\mathcal{D}_s\cap R_s$ from $G_s$.

We are now ready to construct the potentially safe set $\Phi$ from $\mathcal{D}_s$. In this paper, we adopt the \emph{$\alpha$-shape}~\cite{akkiraju1995alpha} to characterize the shape of the desired set. The following definition is standard~\cite{alphashape2011}.
\begin{definition}\label{def:alpha_shape}
    Consider a finite set of points $\mathcal{A}\subset \R^n$. Let an $\alpha$-ball be an open ball with radius $\alpha\in\R_{\geq0}$. Let $\Delta_{\Psi}$ be a $k$-simplex for some $\Psi \subset\mathcal{A}$ such that $|\Psi|=k+1$. A $k$-simplex is $\alpha$-exposed if there exists an empty $\alpha$-ball $b_{\alpha}$ with $\Psi=\partial b_{\alpha} \cap \mathcal{A}$. An $\alpha$-shape, $\Phi_{\alpha}(\mathcal{A})$, of the set $\mathcal{A}$ satisfies $\mathcal{A} \subset \Phi_{\alpha}(\mathcal{A})$ and 
    \begin{equation}
        \partial \Phi_{\alpha}(\mathcal{A})=\{ \Delta_{\Psi} \mid \Psi\subset \mathcal{A}, |\Psi|\leq n, \Delta_{\Psi} \text{ is } \alpha-\text{exposed} \},
    \end{equation}
    i.e., the boundary of the $\alpha$-shape consists of all $k$-simplices of $\mathcal{A}$ for $0\leq k < n$ which are $\alpha$-exposed.
\end{definition}
It follows that $\lim_{\alpha\rightarrow0}\Phi_{\alpha}(\mathcal{A})=\mathcal{A}$ and $\Phi_{\infty}(\mathcal{A})$ is the ordinary convex hull of $\mathcal{A}$. The $\alpha$-shape of a finite point set $\mathcal{A}$ is uniquely determined by $\mathcal{A}$ and $\alpha$. For any given $\alpha$, the corresponding $\alpha$-shape determination algorithm comes with a time complexity of $O( N \log{N} )$~\cite{akkiraju1995alpha}, where $N$ denotes the number of points in $\mathcal{A}$, i.e., $N=|\mathcal{A}|$. In practice, one may also require a certain preferred shape such as a single polytope that wraps $\mathcal{A}$ with the smallest cardinality. This implies a certain cost function of the $\alpha$-shape with the optimal cost determined by the preferred shape. This is typically performed through a logarithmic search of $\alpha$-shapes by modifying the lower and upper bounds of the tested $\alpha$ until the gap between the two bounds becomes sufficiently small~\cite{kengithub}. However, with this method the computational complexity also increases.

\begin{remark}
    In the previous literature of scenario-sampling almost safe set validation~\cite{weng2021towards,weng2021formal}, the $\delta$-covering set is adopted to characterize the set construction. However, as indicated by Problem~\ref{prob}, the data set is presented as it is in this study, and one cannot control the scenario sampling to modify the testing procedure or to add more observations for analysis. For sparse data sets, the $\delta$-covering set tends to have a significant over-approximation. Moreover, the $\delta$-covering set of the finite set is not unique for all non-zero $\delta$s. As a result, the $\alpha$-shape is a more flexible solution to handle various levels of data sparsity with a uniquely determined solution for a given finite set.
\end{remark}

For this paper, the finite set being considered is $\mathcal{D}_s$ derived from Algorithm~\ref{alg:D_s}. Hence the $\alpha$-shape takes the notation as $\Phi_{\alpha}(\mathcal{D}_s)$. Note it is also expected that $\Phi_{\alpha}(\mathcal{D}_s) \cap (\mathcal{D}\setminus \mathcal{D}_s) = \emptyset$. This may require exploring different choices of $\alpha$ if a certain given $\alpha$ fails to satisfy the condition. In practice, this is a rare case as $\mathcal{D}_s$ and $\mathcal{D}\setminus\mathcal{D}_s$ are typically easily distinguishable through a certain set of piece-wise linear constrains. None of the examples studied in Section~\ref{sec:case_study} required this additional suggested exploration.

Finally, to characterize the coverage performance of $\mathcal{D}_s$, we also adopt the following two measures to characterize the density $D$ and occupancy $O$ as
\begin{equation}\label{eq:coverage}
    D=\frac{|\mathcal{D}_s|}{|\Phi_{\alpha}(\mathcal{D}_s)|} \text{ and } O=\frac{|\Phi_{\alpha}(\mathcal{D}_s)|}{|\mathcal{S}|}.
\end{equation}
% \begin{subequations}\label{eq:coverage}
%     \begin{align}
%         \textbf{[Density] } & D=\frac{|\mathcal{D}_s|}{|\Phi_{\alpha}(\mathcal{D}_s)|}, \label{eq:density} \\
%         \textbf{[Occupancy] } & O=\frac{|\Phi_{\alpha}(\mathcal{D}_s)|}{|\mathcal{S}|}. \label{eq:occupancy}
%     \end{align}
% \end{subequations}
Note that there also exists other coverage indicators such as the index of dispersion~\cite{selby1965index} and the star discrepancy~\cite{dang2008sensitive}. However, the index of dispersion is not directly applicable in this paper as $\mathcal{S}$ is not all positive. The star discrepancy is not selected for computational complexity concerns. Other representative coverage metrics are of future interest.

\subsection{Finite-Sampling Almost Robustly Forward Invariant Set Validation}\label{subsec:validation}
We are now ready to characterize how safe the SV is in $\Phi_{\alpha}(\mathcal{D}_s)$. Suppose the validation were to be executed online with a single SV. The data acquisition of $\mathcal{D}$ and its corresponding $\mathcal{TD}$ are thus collected following a particular time sequence w.r.t. the same SV. At a certain step, if one starts consecutively observing transitions that start and stay inside $\Phi_{\alpha}(\mathcal{D}_s)$ until the end of the test, one then has statistical evidence to claim the robustly forward invariance property of $\Phi_{\alpha}(\mathcal{D}_s)$ by Definition~\ref{def:almost-rfiv}. This is presented as a validation routine in Algorithm~\ref{alg:conutN}.

\begin{algorithm}[H]
    \begin{algorithmic}[1]
    \State {\bf Input:} A set of state transitions $\mathcal{TD}\subseteq\mathcal{S}^2$, $\Phi \subseteq \mathcal{S}$
    \State {\bf Initialize: } $i=1, N=0$
    \State {{\bf While} $i<|\mathcal{TD}|$:}
    \State {\ \ \ \ $(\s, \s') = \mathcal{TD}[i]$}
    \State {\ \ \ \ {\bf If} $\s \in \Phi$ and $\s'\in\Phi$:}
    \State {\ \ \ \ \ \ \ \ $N+=1$}
    \State {\ \ \ \ {\bf Else}}
    \State {\ \ \ \ \ \ \ \ $N=0$}
    \State {\ \ \ \ {\bf End If}}
    \State {\ \ \ \ $i += 1$}
    \State {\bf Output:} $N$
    \end{algorithmic}
    \caption{Count consecutive safe transitions for validation $\mathcal{VAL}(\mathcal{TD}, \Phi)$ } \label{alg:conutN}
\end{algorithm}

However, the above described online procedure is not applicable for a fleet of SVs deployed in the nominal driving environment test simultaneously. Moreover, a safety metric is primarily used to analyze the safety performance of a system in a post-processing manner. That is, one replays the data set following a certain order of all elements in $\mathcal{TD}$. For statistical inference, as the set of initializations of all transition pairs are i.i.d. w.r.t. the underlying distribution on $\mathcal{S}$, $\mathcal{TD}$ can thus be replayed in any order. In particular, the replay of $\mathcal{D}$ is formally specified as follows.
\begin{definition}
    Consider the domain-specific finite set $\mathcal{D}$ and the corresponding set of all state transitions $\mathcal{TD}$ as presented in Section~\ref{sec:preliminaries}. The replay of $\mathcal{D}$, $\sigma(\mathcal{D})$, is a permutation of $\mathcal{TD}$, i.e., a certain rearrangement of all elements in $\mathcal{TD}$.
\end{definition}
It is immediate that the total number of possible replays of $\mathcal{D}$ is $|\mathcal{TD}|!$. As long as the probability for each replay order to occur remains the same (i.e., $1/|\mathcal{TD}|!$), the set of initialization of all transition pairs in $\mathcal{TD}$ remains i.i.d. w.r.t. the same underlying distribution on $\mathcal{S}$. We can then formally justify the safety performance of the SV in $\Phi_{\alpha}(\mathcal{D}_s)$ through the following theorem.
\begin{theorem}{[\textbf{$\bar{\epsilon}\alpha$-Almost Robustly Forward Invariance Validation}]}\label{thm:finite-sample-almost-safe}
    Consider $\mathcal{S}\subseteq \R^n$, $\beta\in (0,1), \alpha \in \R_{\geq0}$, the domain-specific finite set $\mathcal{D}$ and the corresponding set of all state transitions $\mathcal{TD}$. Let $\mathcal{D}_s$ be the set of potentially safe states extracted from $\mathcal{D}$ through Algorithm~\ref{alg:D_s}. Let $\Phi_{\alpha}(\mathcal{D}_s)$ be the $\alpha$-shape of $\mathcal{D}_s$ as specified by Definition~\ref{def:alpha_shape}. For a certain replay of $\mathcal{D}$ denoted by the index $i$ as $\sigma_i(\mathcal{D})$, let $N_i=\mathcal{VAL}(\sigma_i(\mathcal{D}), \Phi_{\alpha}(\mathcal{D}_s))$. Then, we have that $\Phi_{\alpha}(\mathcal{D}_s)$ is $\epsilon_i\alpha$-almost robustly forward invariant with confidence level $1-\beta$, and 
    \begin{equation}
        \epsilon_i \leq 1-\exp{\left(\frac{\ln{\beta}}{N_i}\right)}.
    \end{equation}
    Moreover, $\Phi_{\alpha}(\mathcal{D}_s)$ is expected to be $\bar{\epsilon}\alpha$-almost robustly forward invariant with confidence level $1-\beta$ and
    \begin{equation}\label{eq:expected_epsilon}
        \bar{\epsilon} = \mathbb{E}(\epsilon) = \sum_{i=1}^{|\mathcal{TD}|!} \frac{\epsilon_i}{|\mathcal{TD}|!}.
    \end{equation}
    As $\Phi_{\alpha}(\mathcal{D}_s) \cap \mathcal{C}=\emptyset$, $\Phi_{\alpha}(\mathcal{D}_s)$ is also an $\bar{\epsilon}\alpha$-almost safe set.
\end{theorem}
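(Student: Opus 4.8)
The plan is to recast the validation claim as a one-sided confidence bound on a Bernoulli escape probability and then average that bound over replays by linearity of expectation. First I would fix a single replay $\sigma_i(\mathcal{D})$ and exploit the exchangeability established just above the theorem: since the initialization states of all pairs in $\mathcal{TD}$ are i.i.d.\ with respect to the nominal distribution on $\mathcal{S}$, every transition that starts inside $\Phi_{\alpha}(\mathcal{D}_s)$ may be treated as an independent Bernoulli trial whose ``failure'' is the escape event $\{f(\s,\boldsymbol\omega)\notin\Phi_{\alpha}(\mathcal{D}_s)\}$. Writing $p$ for the worst-case escape probability $\sup_{\s\in\Phi_{\alpha}(\mathcal{D}_s)}\mathbb{P}(\{f(\s,\boldsymbol\omega)\notin\Phi_{\alpha}(\mathcal{D}_s)\})$, the set is $\epsilon$-almost robustly forward invariant in the sense of Definition~\ref{def:almost-rfiv} precisely when $p\leq\epsilon$.

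Next I would read off from $\mathcal{VAL}$ (Algorithm~\ref{alg:conutN}) that the returned $N_i$ is the length of a run of consecutive transitions each of which starts and ends inside $\Phi_{\alpha}(\mathcal{D}_s)$, i.e.\ $N_i$ Bernoulli successes with no intervening failure, so the probability of observing such a run is at most $(1-p)^{N_i}$. Setting $\epsilon_i = 1-\exp(\ln\beta/N_i)$ yields $(1-\epsilon_i)^{N_i}=\beta$, and the confidence statement then follows by contraposition: if the set failed to be $\epsilon_i$-almost robustly forward invariant, i.e.\ $p>\epsilon_i$, the observed run would occur with probability $(1-p)^{N_i}<(1-\epsilon_i)^{N_i}=\beta$, so observing it lets us accept $p\leq\epsilon_i$ with confidence at least $1-\beta$. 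This is exactly the claimed per-replay bound $\epsilon_i\leq 1-\exp(\ln\beta/N_i)$.

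To obtain the expected bound I would note that the replay order is uniform over the $|\mathcal{TD}|!$ permutations, each occurring with probability $1/|\mathcal{TD}|!$, so $\epsilon$ is a genuine random variable and $\bar{\epsilon}=\mathbb{E}(\epsilon)=\sum_{i=1}^{|\mathcal{TD}|!}\epsilon_i/|\mathcal{TD}|!$ is simply its expectation against the uniform permutation measure. Since each per-replay claim holds at confidence $1-\beta$, the averaged statement holds at the same level, giving $\bar{\epsilon}\alpha$-almost robust forward invariance. The almost-safe conclusion is then immediate from the set construction: Algorithm~\ref{alg:D_s} deletes from $\mathcal{D}_s$ every state that can reach an unsafe trajectory and the $\alpha$-shape is taken so that $\Phi_{\alpha}(\mathcal{D}_s)\cap(\mathcal{D}\setminus\mathcal{D}_s)=\emptyset$, whence $\Phi_{\alpha}(\mathcal{D}_s)\cap\mathcal{C}=\emptyset$; combining this with the invariance just shown and Definition~\ref{def:almost-safe} yields that $\Phi_{\alpha}(\mathcal{D}_s)$ is $\bar{\epsilon}\alpha$-almost safe.

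I expect the main obstacle to be the independence justification underlying the product $(1-p)^{N_i}$. Transitions within one trajectory are correlated through the dynamics \eqref{eq:s-dyn}, so the product form is not automatic and rests entirely on the exchangeability granted by the uniform replay, which converts the initialization states into i.i.d.\ samples. A secondary subtlety is that $N_i$ is the data-dependent trailing run selected by the reset rule of $\mathcal{VAL}$; I would need to argue that conditioning the confidence statement on the observed run does not invalidate the worst-case bound, and it is precisely the uniform averaging defining $\bar{\epsilon}$ that renders the aggregate claim clean.
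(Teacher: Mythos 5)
Your overall architecture mirrors the paper's proof exactly: a per-replay confidence bound of the form $(1-\epsilon_i)^{N_i}=\beta$, followed by uniform averaging of $\epsilon$ over the $|\mathcal{TD}|!$ equiprobable permutations, followed by the almost-safe conclusion via Definition~\ref{def:almost-safe}. The only structural difference is that the paper obtains the per-replay statement by citing Theorem~2 of~\cite{weng2021towards} as a black box, whereas you attempt to re-derive it from first principles; your averaging and safety steps then coincide with the paper's.

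Your re-derivation, however, contains one step that fails as written. You set $p=\sup_{\s\in\Phi_{\alpha}(\mathcal{D}_s)}\mathbb{P}(\{f(\s,\boldsymbol\omega)\notin\Phi_{\alpha}(\mathcal{D}_s)\})$ and then claim the observed run has probability at most $(1-p)^{N_i}$. With a supremum the inequality points the wrong way: every visited state $\s_j$ satisfies $\mathbb{P}(\text{escape}\mid\s_j)\le p$, so the run probability is $\prod_j\bigl(1-\mathbb{P}(\text{escape}\mid\s_j)\bigr)\ge(1-p)^{N_i}$, and the contraposition collapses. Worse, no finite-sample certificate can control a sup-type quantity at all: a state with escape probability near $1$ may carry arbitrarily small (even zero) mass under the sampling distribution, hence never appear among the i.i.d.\ initializations, leaving the observations uninformative about $p$. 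The quantity your Bernoulli argument genuinely controls is the \emph{distribution-weighted} escape probability --- the probability, over both the i.i.d.\ initialization draw and the disturbance realization, that a transition starting in $\Phi_{\alpha}(\mathcal{D}_s)$ exits it --- equivalently, the measure of the escape-prone subset of $\Phi_{\alpha}(\mathcal{D}_s)$. That is precisely the ``almost, except for an arbitrarily small subset'' semantics that Definition~\ref{def:almost-rfiv} and the cited Theorem~2 intend (the displayed formula in Definition~\ref{def:almost-rfiv}, with universal quantifiers over $\s$ and $\boldsymbol\omega$ sitting alongside a $\mathbb{P}$, is itself garbled and must be read distributionally). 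Replace your sup with this marginal escape probability: then $(1-p)^{N_i}$ is the exact run probability by i.i.d.-ness, the contraposition goes through, and the remainder of your argument matches the paper. Your closing worry about $N_i$ being a data-dependent trailing run is legitimate, but it is equally unaddressed by the paper, which buries it inside the citation.
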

\begin{proof}
For any fixed choice of $\alpha \in \R_{\geq0}$ and a particular data replay $\sigma_i(\mathcal{D}), i\in \Z_{|\mathcal{TD}|!}$, the proof of the $\epsilon_i\alpha$-almost robustly forward invariance property is a direct outcome from Theorem 2 in~\cite{weng2021towards}. Furthermore, consider $\epsilon$ as a random variable and the occurrence probability for each replay is the same, i.e., $\mathbb{P}(\epsilon=\epsilon_i)=1/|\mathcal{TD}|!$. The expected $\epsilon$ is thus obtained in the form of~\eqref{eq:expected_epsilon}. Finally, the $\bar{\epsilon}\alpha$-almost safe property is a direct outcome of Definition~\ref{def:almost-safe}.
\end{proof}
\begin{figure}
    \centering
    \vspace{2mm}
    \includegraphics[width=0.49\textwidth]{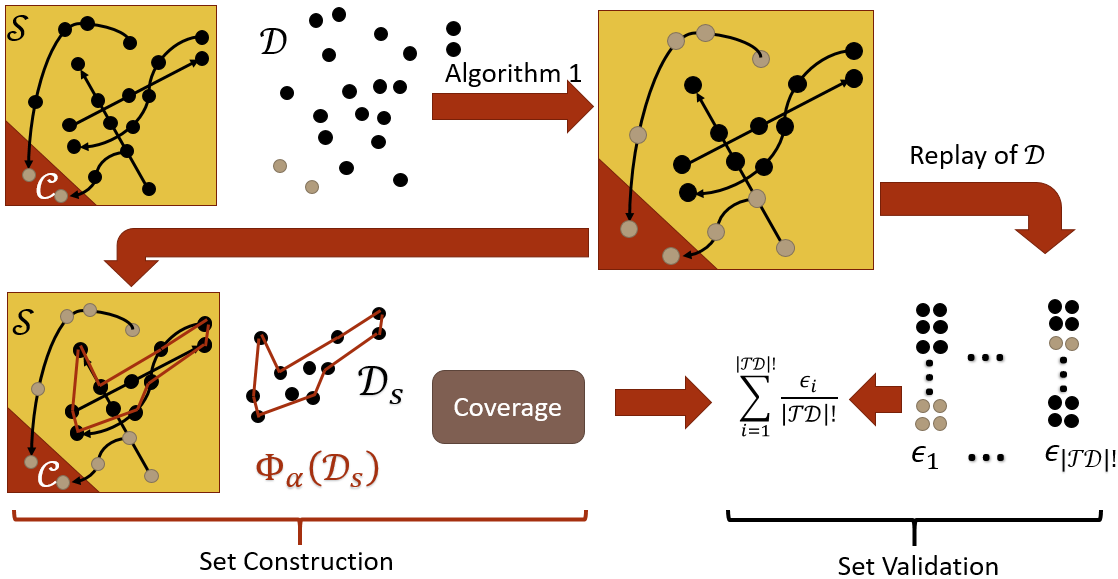}
    \caption{Overview of the proposed finite-sampling safe operable domain quantification algorithm.}
    \label{fig:main}
    \vspace{-3mm}
\end{figure}
Note that the derived forward invariance property essentially relies on the system model \eqref{eq:dyn}, which fundamentally assumes the motion dynamics follows the Markov Decision Process (MDP). This is not necessarily valid in practice as, for example, the SV driving algorithm may also rely on a series of historical state observations for decision-making and control. A more general analysis regarding the non-MDP configuration is beyond the scope of this paper.

\subsection{Finite-sampling Operable Domain Quantification Algorithm}\label{subsec:alg}

So far, we have presented the set construction and set validation steps. The complete algorithm that tackles Problem~\ref{prob} is summarized in Algorithm~\ref{alg:main} and is also conceptually illustrated in Fig.~\ref{fig:main}.
\begin{algorithm}[H]
    \begin{algorithmic}[1]
    \State {\bf Input:} $\mathcal{D}\subseteq \mathcal{S}, \mathcal{C}$, $\mathcal{TD}\subseteq\mathcal{S}^2$, $\alpha\in \R_{\geq0}, \beta\in(0,1)$
    \State {Determine $\mathcal{D}_s$ with Algorithm~\ref{alg:D_s}}
    \State {Determine $\Phi_{\alpha}(\mathcal{D}_s)$ given $\mathcal{D}_s$ and $\alpha$~\cite{akkiraju1995alpha,kengithub} }
    \State {Determine all $s$ pairs of state transitions $\mathcal{TD}_s\subseteq\mathcal{TD}$ such that $\mathcal{TD}_s \subseteq \mathcal{D}_s^2$ and $\mathcal{TD}\setminus\mathcal{TD}_s\subseteq (\mathcal{D}\setminus\mathcal{D}_s)^2$}
    \State {$\bar{\epsilon} = 0$}
    \State {{\bf For} $i$ in $\Z_{s}$ {\bf do}}
    \State {\ \ \ \ $\epsilon_i = 1-\exp{\left(\frac{\ln{\beta}}{i}\right)}$}
    \State {\ \ \ \ $p_i = \frac{(i!)\cdot(|\mathcal{TD}|-i)!}{|\mathcal{TD}|!}$}
    \State {\ \ \ \ $\bar{\epsilon}+=\epsilon_i p_i$}
    \State {{\bf End For}}
    \State {\bf Output:} $\Phi_{\alpha}(\mathcal{D}_s), \bar{\epsilon}$ 
    \end{algorithmic}
    \caption{Finite-sampling Operable Domain Quantification} \label{alg:main}
\end{algorithm}
Note that the derivation of $\bar{\epsilon}$ is slightly different from Theorem~\ref{thm:finite-sample-almost-safe} as replays sharing the same $\epsilon$ value are grouped together to improve the computational performance. The density and occupancy features can also be derived through~\eqref{eq:coverage}. Given the finite set $\mathcal{D}_s$ and the selected $\alpha$, $\Phi_{\alpha}(\mathcal{D}_s)$ is determined by the standard $\alpha$-shape algorithm~\cite{akkiraju1995alpha,kengithub}. In practice, we also use the discussed logarithmic search scheme in Section~\ref{subsec:construction} to determine the appropriate $\alpha$. Implementation details will also be discussed in Section~\ref{sec:case_study}. 

Most of the computational burden for Algorithm~\ref{alg:main} comes from the alpha-shape construction (line 3). For a fixed $\alpha$, the adopted algorithm has a computational complexity of $O(N_D\cdot N_{\alpha})$ with $N_D$ being the cardinality of $\mathcal{D}_s$ and $N_{\alpha}$ being the cardinality of the finite set of points that form the shape of $\Phi_{\alpha}(\mathcal{D}_s)$. In general, $N_{\alpha}$ is often significantly smaller than $N_D$

We conclude this section by emphasizing that the $\mathcal{D}_s$ and the obtained $\alpha$-shape are not only embedded with coverage and forward invariance information. The graph $G_s$ induces state transitions that could be used for other safety related applications such as fault tree analysis with backtracking process algorithms~\cite{hejase2020methodology, capito2021bpa} and information gain justification~\cite{collin2021plane}. The states can also be associated with other safety features available from the raw data such as human driver engagement (e.g., a human may tend to engage within a certain subset of the obtained covering set) and system signals (e.g., the forward collision warning may only be triggered in a certain subregion). Existing driving data sets collected from real-world and simulators are not comprehensive enough to provide the aforementioned features. Hence, considering those features is regarded as future work.

\section{Case Study} \label{sec:case_study}
To demonstrate the performance of the proposed safety metric, a series of cases are studied in this section. Detailed configurations are summarized in Table~\ref{tab:summary} and explained as follows.

\begin{table*}[]
    \vspace{2mm}
    \centering
    \resizebox{0.95\textwidth}{!}{%
    \begin{tabular}{|c|l|c|c|c|c|c|c|c|c|}
    \hline
    \multicolumn{2}{|c|}{\textbf{Case}}                                                                     & \multicolumn{2}{c|}{HighD data}                                                              & Waymo open data set                                                                            & \multicolumn{2}{c|}{SUMO}                   & Carla     & \multicolumn{2}{c|}{NCAP-AEB}                                                                     \\ \hline
    \multicolumn{2}{|c|}{\textbf{\begin{tabular}[c]{@{}c@{}}Nominal\\ Driving \\ Environment\end{tabular}}} & \multicolumn{2}{c|}{\begin{tabular}[c]{@{}c@{}}real-world\\ (straight highway)\end{tabular}} & \begin{tabular}[c]{@{}c@{}}real-world\\ (highway, \\ urban, \\ intersection)\end{tabular}     & \multicolumn{3}{c|}{\begin{tabular}[c]{@{}c@{}}simulation\\ (highway, urban, intersection)\end{tabular}}      & \multicolumn{2}{c|}{\begin{tabular}[c]{@{}c@{}}simulation\\ (straight highway)\end{tabular}}             \\ \hline
    \multicolumn{2}{|c|}{\textbf{SV Driver}}                                                                & \multicolumn{2}{c|}{human}                                                                   & \multirow{2}{*}{\begin{tabular}[c]{@{}c@{}}Waymo's self-driving \\ car (SDC)\end{tabular}} & \multicolumn{2}{c|}{lane change heuristics} & \multirow{2}{*}{\begin{tabular}[c]{@{}c@{}}Carla \\ autopilot\end{tabular}} & &\\ \cline{6-7}
    \multicolumn{2}{|c|}{}                                                                                  & \multicolumn{2}{c|}{driver}                                                                  &                                                                                               & IDM\_0                & IDM\_1              &                                                  &    IDM\_0                & IDM\_1                        \\ \hline
    \multicolumn{2}{|c|}{\textbf{SV Type}}                                                                  & Car                                           & Truck                                        & Car                                                                                           & \multicolumn{2}{c|}{Car}                    & Car                                                   & \multicolumn{2}{c|}{Car}                       \\ \hline
    \multicolumn{2}{|c|}{\textbf{\begin{tabular}[c]{@{}c@{}}Other \\ Traffic Type\end{tabular}}}            & \begin{tabular}[c]{@{}c@{}}Car \\ \& Truck\end{tabular} &\begin{tabular}[c]{@{}c@{}}Car \\ \& Truck\end{tabular}                                          & \begin{tabular}[c]{@{}c@{}}Car \\ \& Pedestrian\end{tabular}                                  & \multicolumn{2}{c|}{Car}                    & \begin{tabular}[c]{@{}c@{}}Car \\ \& Pedestrian\end{tabular}   & \multicolumn{2}{c|}{Car}               \\ \hline
    \end{tabular}%
    }
    \caption{Summary of all real-world data sets and simulators used for the case study section.}
    \label{tab:summary}
    \vspace{-5mm}
\end{table*}

\paragraph{HighD data set } The HighD data set~\cite{krajewski2018highd} is a data set of naturalistic vehicle trajectories recorded on German highways. The data set comes with a mixture of car and truck drivers operating on straight-road highway segments. It is a well-known fact that naturalistic driving behavior exhibits statistical consensus in general, but also with discrepancies that depend on the vehicle type. This inspires our study in this section by analyzing the human driver safety performance w.r.t. different vehicle types and different ODDs. 
\paragraph{Waymo open data} The Waymo open data~\cite{sun2020scalability} used in this study is the motion data set, which is primarily used for training and validating traffic motion prediction algorithms. In this study, we redirect the data set to the safety analysis purpose by taking advantage of the motion trajectories recorded for Waymo's self-driving car (SDC) and the surrounded mixed-traffic road users, especially vehicles and pedestrians.
\paragraph{SUMO} The Simulation of Urban MObility (SUMO)~\cite{SUMO2018} is an open source, microscopic, and continuous multi-modal traffic simulator. In this study, we compare two parametric self-driving algorithms, referred to as IDM\_0 and IDM\_1, both based on the Intelligent Driving Model (IDM)~\cite{treiber2013traffic}, a well-known and well-understood vehicle following controller, as an illustration of the application of our metric. The simulated traffic is created with a variety of vehicles of different dynamics and self-driving configurations operating in a mapped environment with a mixture of highway and urban roads. A fleet of 20 SVs with each parametric policy is then deployed along with the simulated traffic for a 24-hour operation in simulation. 
\paragraph{Carla} The simulated mixed-traffic environment in Carla~\cite{dosovitskiy2017carla} involves a fleet of vehicles driven by the default autopilot algorithm along with randomly deployed other vehicles and pedestrians. The simulation is executed in four mapped environments (Town 01 to 04) in Carla. Within each environment, the simulation terminates when the SV reaches a total travel distance of 100-kilometer.
\paragraph{NCAP-AEB} The SV algorithms adopted to create this data set are the same as those in the SUMO case. Two parametric IDM algorithms are deployed in a simulated straight-road segment with the lead principal other vehicle (POV) executing the testing policy specified in the NCAP Autonomous Emergency Braking (AEB) car-to-car test program~\cite{van2017euro}. The program involves 48 testing scenarios with each scenario executed once.

\begin{remark}\label{rmk:idms}
    To differentiate between IDM\_0 and IDM\_1, IDM\_0 is parameterized with a stronger braking capability but has a higher tolerance of short following distances (due to a smaller minimum safe distance and a smaller safe time headway as shown in Table~\ref{tab:idm_config}). The IDMs used in NCAP-AEB and SUMO mostly share similar specifications. However, the SUMO simulator also includes other hyper-parameters that may affect the performance, such as the perfectness and the lateral lane change heuristics, which have been kept with their default values. 
\end{remark}

\begin{table}
    \vspace{2mm}
    \centering
    \resizebox{0.45\textwidth}{!}{%
    \begin{tabular}{l|cc}
    \hline
    SV                                         & \multicolumn{1}{l|}{IDM\_0} & \multicolumn{1}{l}{IDM\_1} \\ \hline
    minimum safe distance (m)                  & \multicolumn{1}{c|}{0.5}    & 4                          \\ \hline
    safe time headway (s)                      & \multicolumn{1}{c|}{0.1}    & 4                          \\ \hline
    maximum brake control $\mathrm{m/s^2}$)    & \multicolumn{1}{c|}{9}      & 2                          \\ \hline
    free-traffic speed $\mathrm{m/s}$)         & \multicolumn{2}{c}{25}                                   \\ \hline
    maximum acceleration $\mathrm{m/s^2}$)     & \multicolumn{2}{c}{0.73}                                 \\ \hline
    comfortable deceleration $\mathrm{m/s^2}$) & \multicolumn{2}{c}{1.67}                                 \\ \hline
    exponent of acceleration                   & \multicolumn{2}{c}{4}                                    \\ \hline
    \end{tabular}%
    }
    \caption{The hyper-parameter configurations of the two IDM variants for the SUMO and NCAP-AEB cases.}
    \label{tab:idm_config}
    \vspace{-5mm}
\end{table}

\begin{remark}
    As a pure data-driven approach, the obtained safety performance evaluations throughout this section are only based on the given data assuming the collected data points are i.i.d. w.r.t. the distribution in the nominal driving environment. This is generally true in simulator-based tests such as in Carla and SUMO, but is not necessarily valid for real-world driving data sets as the data processing details are largely unknown. As a result, the claimed safety performance from the HighD data set and the Waymo open motion data set do not necessarily represent the corresponding SVs' actual safety performance.
\end{remark}

Before proceeding to the domain-oriented safety evaluation outcomes, we first emphasize some featured observations:
\begin{itemize}
    \item The safe operable domain of a certain SV is a joint outcome of the SV's own driving behavior, the other dynamic road users' behavior, and the test environment.
    \item Within the same case study (i.e., the same testing behavior and environment), it is in general inaccurate to claim that a certain SV is safer than the other, unless the outcome concurs among all features, i.e., small $\bar{\epsilon}$, large density, and large occupancy.
    \item Comparing the proposed safety metric with the statistical fatality rate inference~\cite{fraade2018measuring}, given the same confidence level and the same data set $\mathcal{D}$, the magnitude of $\bar{\epsilon}$ is significantly smaller than the fatality rate value. That is, the inferred fatality rate metric tends to over-estimate the risk, especially when the collected finite states are clustered in a specific sub-domain in the nominal driving environment. In the meanwhile, the operational domain specific nature of the proposed metric helps establish a more precise safety performance assessment. 
\end{itemize}

\begin{table*}
    \vspace{2mm}
    \centering
    \resizebox{0.95\textwidth}{!}{%
    \begin{tabular}{c|c|c|c|c|c||c|c|c}
    \hline
    \begin{tabular}[c]{@{}c@{}}Data\\ Source\end{tabular} & SV                          & \begin{tabular}[c]{@{}c@{}}Safe Distance\\ (km)\end{tabular} & 1-R(C=0.999)                & TTC (s)                                & TTC Valid Rate             & $\bar{\epsilon} (\times 10^{-4})$             & $\frac{|\mathcal{D}_s|}{|\Phi_{\alpha}(\mathcal{D_s})|}$                 & $\frac{|\Phi_{\alpha}(\mathcal{D_s})|}{|\mathcal{S}|}$ \\ \hline
    \multirow{2}{*}{NCAP-AEB}                             & IDM\_0                      & N/A                                 & N/A             & 1.368 $\pm$ 1.051         & {\bf 0.964}       & {\bf 13.0199}     & {\bf 0.832}         & 0.1960                                               \\
                                                          & IDM\_1                      & N/A                                 & N/A             & {\bf 1.229} $\pm$ 2.223   & 0.244             & 4.9077            & 2.464               & {\bf 0.0568}                                               \\ \hline
    \multicolumn{1}{c|}{\multirow{2}{*}{SUMO}}            & IDM\_0                      & 5725.99                             & 0.0019          & 8.844 $\pm$ 0.685         & 0.378             & {\bf 0.7717}      & {\bf 1.752}         & {\bf 0.2619}                                               \\
    \multicolumn{1}{c|}{}                                 & IDM\_1                       & N/A                                & N/A             & {\bf 8.581} $\pm$ 1.288   & {\bf 0.447}       & 0.5121            & 2.009               & 0.3265                                               \\ \hline
    \multirow{2}{*}{HighD}                                & Car                         & 3276.48                             & 0.0034          & {\bf 8.871} $\pm$ 0.666   & {\bf 0.507}       & 0.5732            & 7.675               & 0.5807                                               \\
                                                          & Truck                       & \textbf{551.81}                     & {\bf 0.0199}    & 8.951 $\pm$ 0.413         & 0.442             & {\bf 2.9755}      & {\bf 1.968}         & {\bf 0.4418}                                               \\ \hline
    \end{tabular}%
    }
    \caption{Safety study of the lead-vehicle following domain with the NCAP-AEB test of two parametric IDMs, two types of human-driven vehicles from the HighD data set, and two different parametric driving algorithms from the SUMO simulator. Bold typeface highlights values that indicate the higher-risk driving behavior.}
    \label{tab:lead_follow}
\vspace{-7mm}
\end{table*}
\begin{figure*}[!h]
\vspace{2mm}
\centering
\begin{subfigure}{0.32\linewidth}
  \centering
  \includegraphics[trim={2cm 1cm 3cm 1cm},clip,width=0.95\textwidth]{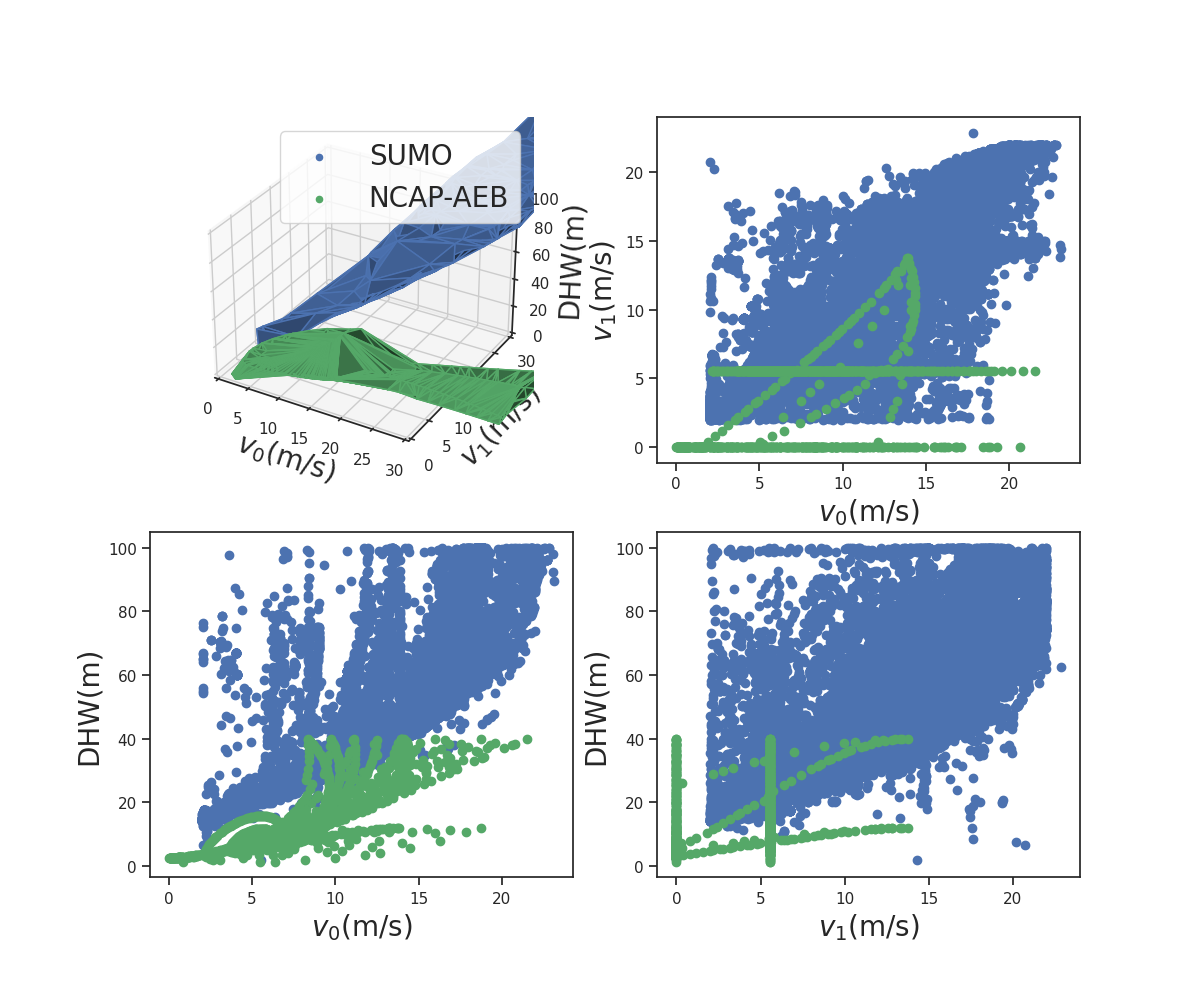}
  \caption{\small{The $\bar{\epsilon}\alpha$-almost safe sets obtained for IDM\_0 in the NCAP-AEB case.}}
  \label{fig:idm_0}
\end{subfigure}%
\hspace{1em}%
\begin{subfigure}{0.32\linewidth}
  \centering
  \includegraphics[trim={2cm 1cm 3cm 1cm},clip,width=0.95\textwidth]{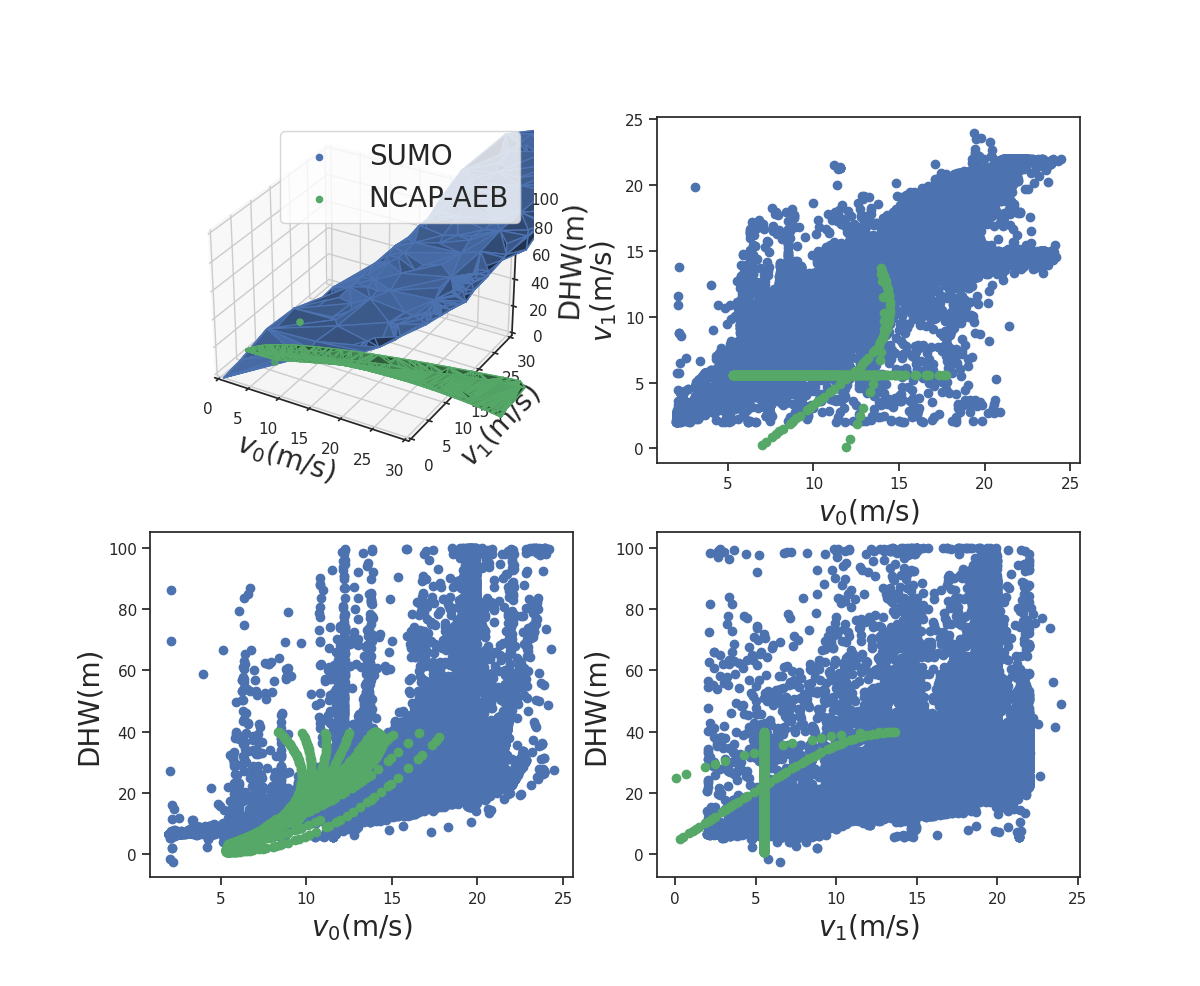}
  \caption{\small{The $\bar{\epsilon}\alpha$-almost safe sets obtained for IDM\_1 in the NCAP-AEB case.}}
  \label{fig:idm_1}
\end{subfigure}%
\hspace{1em}%
\begin{subfigure}{0.32\linewidth}
  \centering
  \includegraphics[trim={2cm 1cm 2cm 1cm},clip,width=0.95\textwidth]{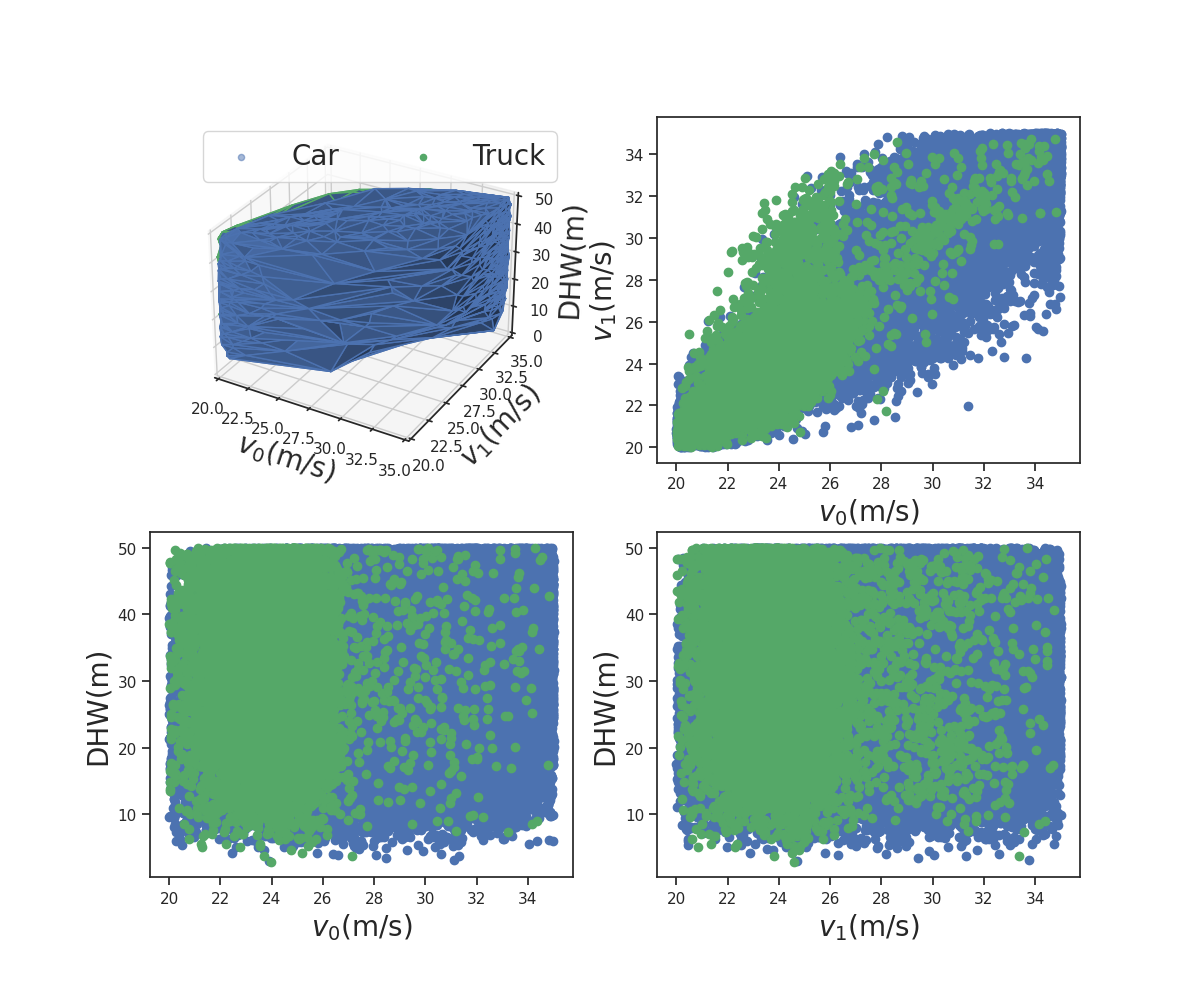}
  \caption{\small{Comparing the $\bar{\epsilon}\alpha$-almost safe sets obtained from the class of car drivers and the class of truck drivers in the HighD case.}}
  \label{fig:highD_leadfollow}
\end{subfigure}
\caption{\small{Comparing the $\bar{\epsilon}\alpha$-almost safe sets obtained from various cases for the lead-vehicle following domain.}}
\label{fig:lead_follow}
\vspace{-3mm}
\end{figure*}

Finally, note that the selection of $\alpha$ for the set construction follows the procedure described in Section~\ref{subsec:construction}, with the initial lower and upper bounds of $\alpha$ set to $0.01$ and $100$, respectively. These bounds are set for computational time concerns, and were found heuristically to have a reasonable computation time while ensuring that we obtained the same results as with looser bounds. For all the cases presented as examples in the section, changing the bounds to $[0, \inf)$ will not change the final result but will increase the time required to explore the optimal $\alpha$. The search terminates if the best $\alpha$-shape that wraps $\mathcal{D}_s$ in a single polytope is found (the termination threshold is 0.1). That is, to a certain extent, the proposed algorithm not only finds the almost safe operable domain, but also finds the \emph{optimal} almost safe operable domain. Recall that the $\alpha$-shape construction algorithm runs at a time complexity of $O( N \log{N} )$, for the high-dimensional ODD analysis with a significantly large data set, such as the multi-vehicle interaction domain with the HighD data set, we also implement a hierarchical $k$-means clustering routine to divide the data points into several clusters until all clusters are smaller than a chosen preset threshold. The final $\alpha$-shape is then determined by combining all $\alpha$-shapes derived from the obtained clusters. In practice, the computationally convenient threshold value is dependent upon the OSS dimension and the computational capability available for the analysis. On a computer with a 2.3 GHz CPU, We use 100000 and 1000 points for the three-dimensional lead-vehicle following domain and the 13-dimensional multi-vehicle interaction domain, respectively. Throughout this section, we also have $\beta=0.001$. As a result, all obtained $\bar{\epsilon}$ values are derived with a confidence level of at least 0.999.

\subsection{The lead-vehicle following domain}
We start with the lead-vehicle following domain and analyze three different cases including HighD data set, SUMO-based simulation, and a customized simulation executing the NCAP-AEB car-to-car testing procedure. For the HighD data set, we define $\mathcal{S}$ with $p_{\max}=50 \text{ m}, v_{\min}=20 \text{ m/s}$, and $v_{\max}=35 \text{ m/s}$ and consider all vehicles in all lanes. The extracted trajectories are further classified into two categories determined by the SV's type (car or truck). For the SUMO simulation, we consider $\mathcal{S}$ with $p_{\max}=100 \text{ m}, v_{\min}=0 \text{ m/s}$, and $v_{\max}=30 \text{ m/s}$. For the NCAP-AEB case, the testing procedure defines the $\mathcal{S}$ as $p_{\max}=40 \text{ m}, v_{\min}=0 \text{ m/s}$, and $v_{\max}=25 \text{ m/s}$. The experiment results are summarized in Table~\ref{tab:lead_follow} and illustrated in Fig.~\ref{fig:lead_follow}. 

In Table~\ref{tab:lead_follow}, the safe travel distance and the inferred fatality rate are not available (N/A) for some of the cases as there are collisions included in $\mathcal{D}$. The TTC is presented with the average value and the standard deviation of all admissible states. All TTC values are clipped at 9 seconds for convenience as the large TTC is of very little value for safety analysis. A TTC is valid if it is positive (i.e. $v_0>v_1$). The TTC validate rate is determined as the ratio between the total number of time steps with valid TTC and the total number of time steps in the data. Within each case and each column, the bold font emphasizes the value that indicates the higher-risk driving behavior (e.g., small average TTC and small $\alpha$-shape occupancy). We acknowledge that the safety indication through TTC is not necessarily based on the adopted criterion only, but may also admit other variants such as a certain threshold (e.g., the time-to-collision violation in~\cite{wishart2020driving}), yet those variants fundamentally share the same problem of model predictive collisions as discussed in Section~\ref{sec:introduction} and \cite{weng2021model}. We further emphasize some observations as follows.

First, the comparison between SUMO and NCAP-AEB presents an interesting case of the same set of SVs tested with different testing policies induced by the traffic vehicle behavior. From the statistical summary shown in Table~\ref{tab:lead_follow}, with the TTC based evaluation, IDM\_1 is considered more dangerous in both the NCAP-AEB and the SUMO cases, yet the valid rates are different. On the other hand, the proposed operational domain specific and unbiased safety evaluation considers IDM\_1 as the mostly safer behavior because it exhibits a higher probability of remaining inside the operable domain dictated by the $\alpha$-shape (small $\bar{\epsilon}$) with a higher density. However, note that both the illustrated $\alpha$-shapes in Fig.~\ref{fig:idm_0}, Fig.~\ref{fig:idm_1}, and the occupancy values in Table~\ref{tab:lead_follow} illustrate that the obtained safe operable domains from the two cases (SUMO and NCAP-AEB) are different. Recalling Remark~\ref{rmk:idms}, IDM\_1 has a lower tolerance of short following distances. This deficiency is more pronounced in the NCAP-AEB case with a more aggressive lead-vehicle driving behavior than the SUMO case. IDM\_1 thus ends up with a relatively smaller safe operable domain than IDM\_0 in the NCAP-AEB case, whereas in the SUMO case, the IDM\_1's safe operable domain is larger. In summary, IDM\_1's low tolerance of short following distances leads to a safer behavior than IDM\_0 in the normal driving environment, yet it also confines itself to a smaller safe operable domain in the NCAP-AEB case which is more biased towards the falsification purpose with the other traffic behaving relatively aggressively.

Second, for the HighD case, the proposed metric mostly agrees with the mileage-based fatality rate measure and identifies the naturalistic behavior induced by the class of truck drivers as more dangerous than that of the class of car drivers, even though the class of car drivers exhibits a smaller average TTC. This contradicts the commonly held interpretation of lower TTC-based metric values as an indication of less safe driving behaviors. It also aligns with the well-known deficiency of TTC also reported by other work in the literature~\cite{weng2020model, weng2021model}.

Finally, throughout all the cases, given the same confidence level, the $\bar{\epsilon}$ values from the proposed metric all exhibit a significantly smaller magnitude than the fatality rate value (mostly ten-thousand times smaller). Fundamentally, this occurs because the inferred fatality rate from~\cite{fraade2018measuring} does not have an explicitly defined operable domain. As a result, one would require a larger data set to obtain a similar level of probability to that of our proposed domain-specific metric.

\subsection{Multi-vehicle Interaction and Vehicle-to-Pedestrian Interaction}
\begin{figure}
\vspace{2mm}
\centering
\begin{subfigure}{\linewidth}
  \centering
  \includegraphics[trim={1.cm 0 0.5cm 0cm},clip,width=0.99\textwidth]{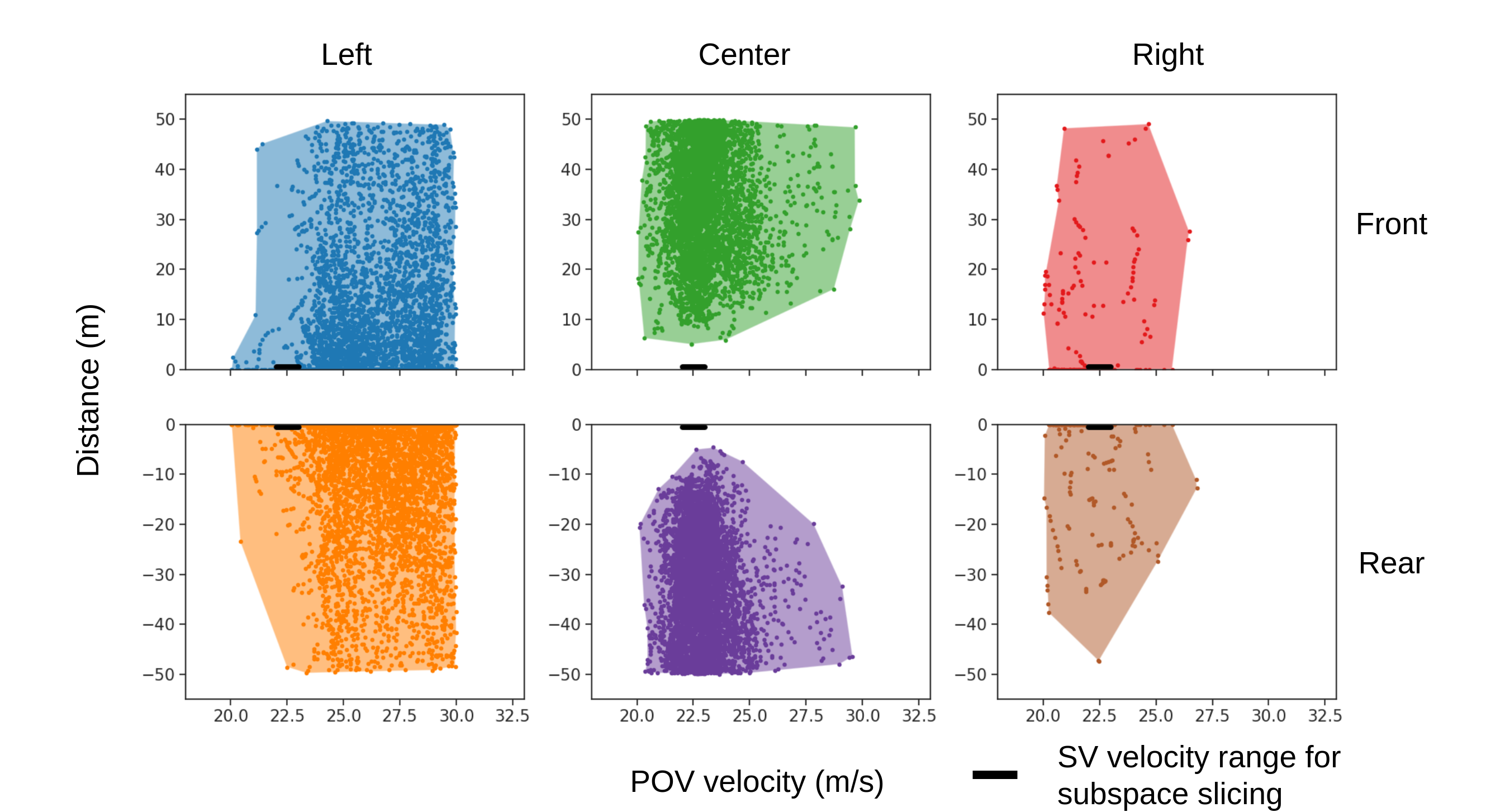}
  \caption{\small{Subspace slicing for $v_0 \in [22,23]$ with truck being the SV.}}
  \label{fig:highD2D_truck_lowspeed}
\end{subfigure}
\begin{subfigure}{\linewidth}
  \centering
  \includegraphics[trim={1.cm 0 0.5cm 0cm},clip,width=0.99\textwidth]{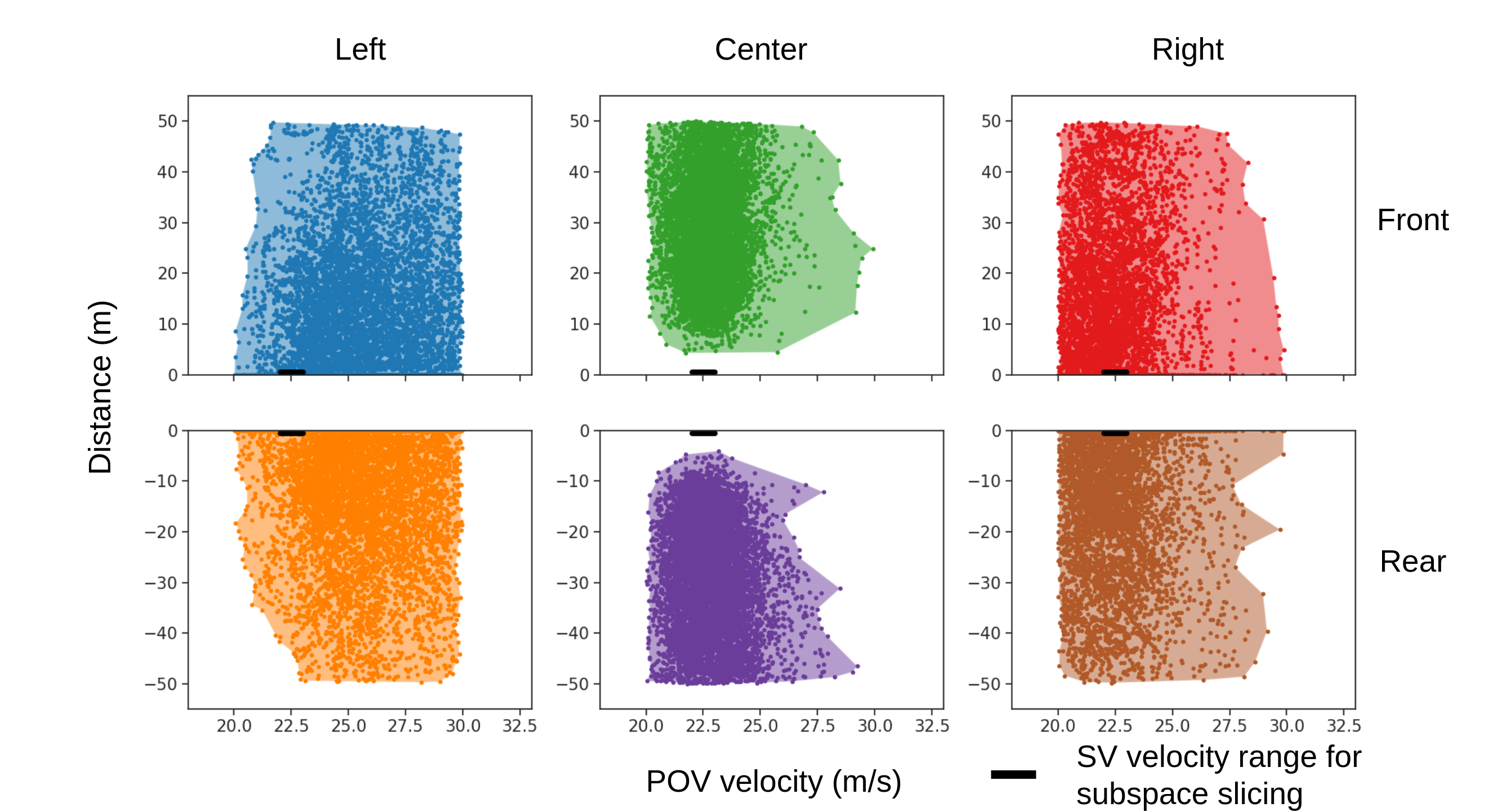}
  \caption{\small{Subspace slicing for $v_0 \in [22,23]$ with car being the SV.}}
  \label{fig:highD2D_car_lowspeed}
\end{subfigure}
\begin{subfigure}{\linewidth}
  \centering
  \includegraphics[trim={1.cm 0 0.5cm 0cm},clip,width=0.99\textwidth]{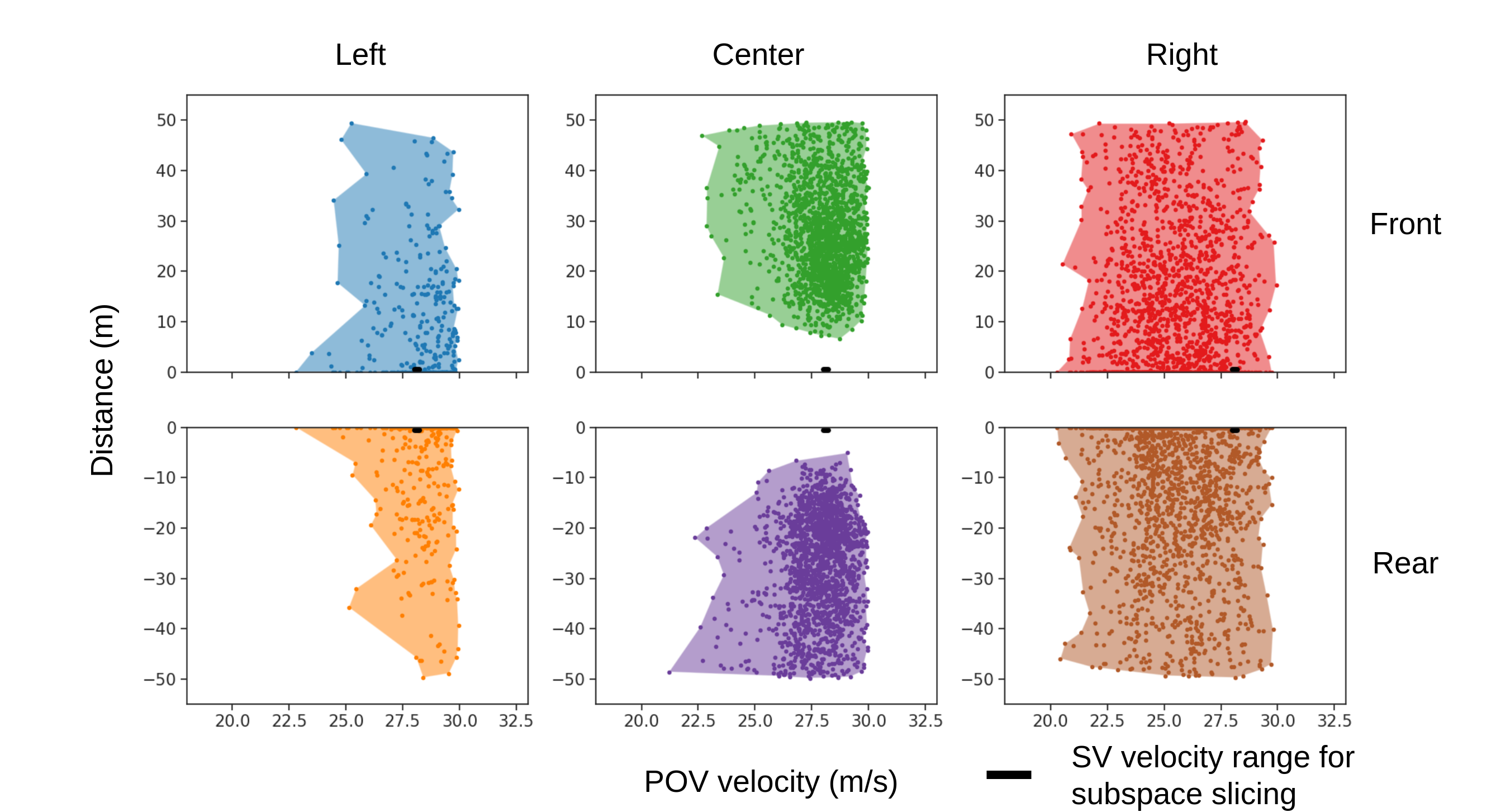}
  \caption{\small{Subspace slicing for $v_0 \in [28,28.5]$ with car being the SV.}}
  \label{fig:highD2D_car_highspeed}
\end{subfigure}
\caption{\small{Comparing the subspace slicing of the $\bar{\epsilon}\alpha$-almost safe sets obtained from HighD data set for the multi-vehicle interaction domain with different SV velocity ranges.}}
\label{fig:highD_closedall}
\vspace{-3mm}
\end{figure}

\begin{table}
    \vspace{2mm}
    \centering
    \resizebox{0.49\textwidth}{!}{%
    \begin{tabular}{c|c|c||c|c|c}
    \hline
    SV     & Safe Distance (km) & 1-R(C=0.999) & $\bar{\epsilon} (\times 10^{-4})$ (C=0.999) & $\frac{|\mathcal{D}_s|}{|\Phi_{\alpha}(\mathcal{D_s})|}(\times 10^{-12})$  & $\frac{|\Phi_{\alpha}(\mathcal{D})|}{|\mathcal{S}|}$ \\ \hline
    Car    & 536.895                       & 0.0205                & 0.0004              & {\bf 0.1507}             & 0.3505       \\ \cline{1-1}
    Truck  & \textbf{168.042}              & \textbf{0.0640}       & \textbf{0.0012}     & 0.3461      & {\bf 0.0441}                \\ \hline
    \end{tabular}%
    }
    \caption{Safety study of the multi-vehicle interaction domain with the HighD data set.}
    \label{tab:highD_allclosed}
    \vspace{-5mm}
\end{table}

This section starts with a case study of the HighD case with the multi-vehicle interaction domain. As the driving environment in the HighD case consists of only straight-road segments, the domain specification shown in Fig.~\ref{fig:model} directly applies with $p_{\max}=50 \text{ m}, p_{\min}=-50 \text{ m}, v_{\min}=20 \text{ m/s}, v_{\max}=30 \text{ m/s}$. The domain extraction for the left and right regions is confined to the adjacent lanes near the SV's lane and also excludes some SV lanes with light traffic on the side (mostly with lane ID 5). The results are summarized in Table~\ref{tab:highD_allclosed} and Fig.~\ref{fig:highD_closedall}. 

For the statistical inferred fatality rate, the truck is considered more dangerous with a short safe travel distance. On the other hand, for the proposed method, the truck is considered more dangerous with a large $\bar{\epsilon}$ and a large occupancy value, yet the point density is also large. Moreover, comparing the center column subplots between Fig.~\ref{fig:highD2D_truck_lowspeed} and Fig.~\ref{fig:highD2D_car_lowspeed}, at least within the inspected SV velocity range, the car and the truck share similar lead-following distances (indicated by the bottom of the front-center subplots in green) and rear-following distances (indicated by the bottom of the rear-center subplots in purple). That is, the other traffic vehicles are not staying further away from a truck than they do from a car, nor do trucks maintain a longer following distance from the lead vehicle than cars. This contradicts the intuitive opinion one typically forms about naturalistic driving behavior in the real-world. Finally, comparing the first column with the third column in all three subplots in Fig.~\ref{fig:highD_closedall}, one observes that vehicles on the left typically travel at a faster speed than the SV. This aligns with the nature of the data set as the HighD data set is primarily collected from German highways. This observation may not hold as we examine other driving environments, as we shall soon demonstrate. 

As we consider the Waymo case and the Carla case, where pedestrian information is available, the vehicle-pedestrian interaction domain is involved. Although both cases involve a variety of driving environments including highway, urban roads, intersections, and roundabout, the domain definition considers them as unknown disturbances and uncertainties. The specifications shown in Fig.~\ref{fig:model} still apply. For both cases, we have $p_{\max}=50 \text{ m}, p_{\min}=-50 \text{ m}, v_{\min}=1 \text{ m/s}, v_{\max}=25 \text{ m/s}$. The side subregion falls between the lateral offsets of $2.5 \text{ m}$ and $10 \text{ m}$ from the SV's geometric center. The results are summarized in Table~\ref{tab:v2vp}. Fig.~\ref{fig:other_closedall} illustrates the multi-vehicle interaction domain sub-space analysis where the vehicle-pedestrian states (the last 4 dimensions in the 17-dimensional OSS) are ignored. The vehicle-to-pedestrian interaction is analyzed separately in Fig.~\ref{fig:ped}.

\begin{figure}
\centering
\vspace{2mm}
\begin{subfigure}{\linewidth}
  \centering
  \includegraphics[trim={1.cm 0 0.5cm 0cm},clip,width=0.99\textwidth]{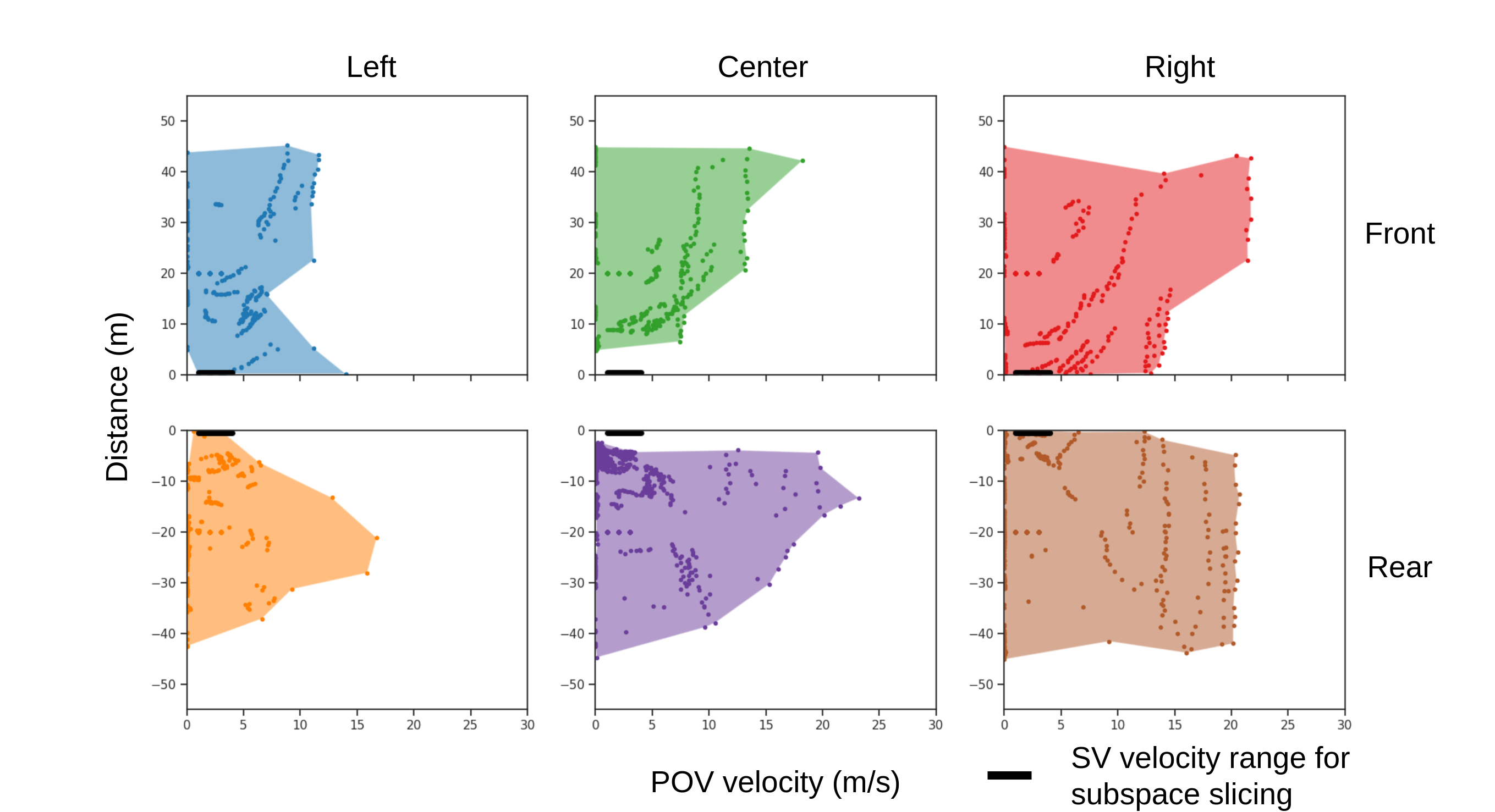}
  \caption{\small{Subspace slicing for $v_0 \in [1,4] \text{ m/s}$ with Waymo-SDC being the SV.}}
  \label{fig:waymo_closedall_lowspeed}
\end{subfigure}
\begin{subfigure}{\linewidth}
  \centering
  \includegraphics[trim={1.cm 0 0.5cm 0cm},clip,width=0.99\textwidth]{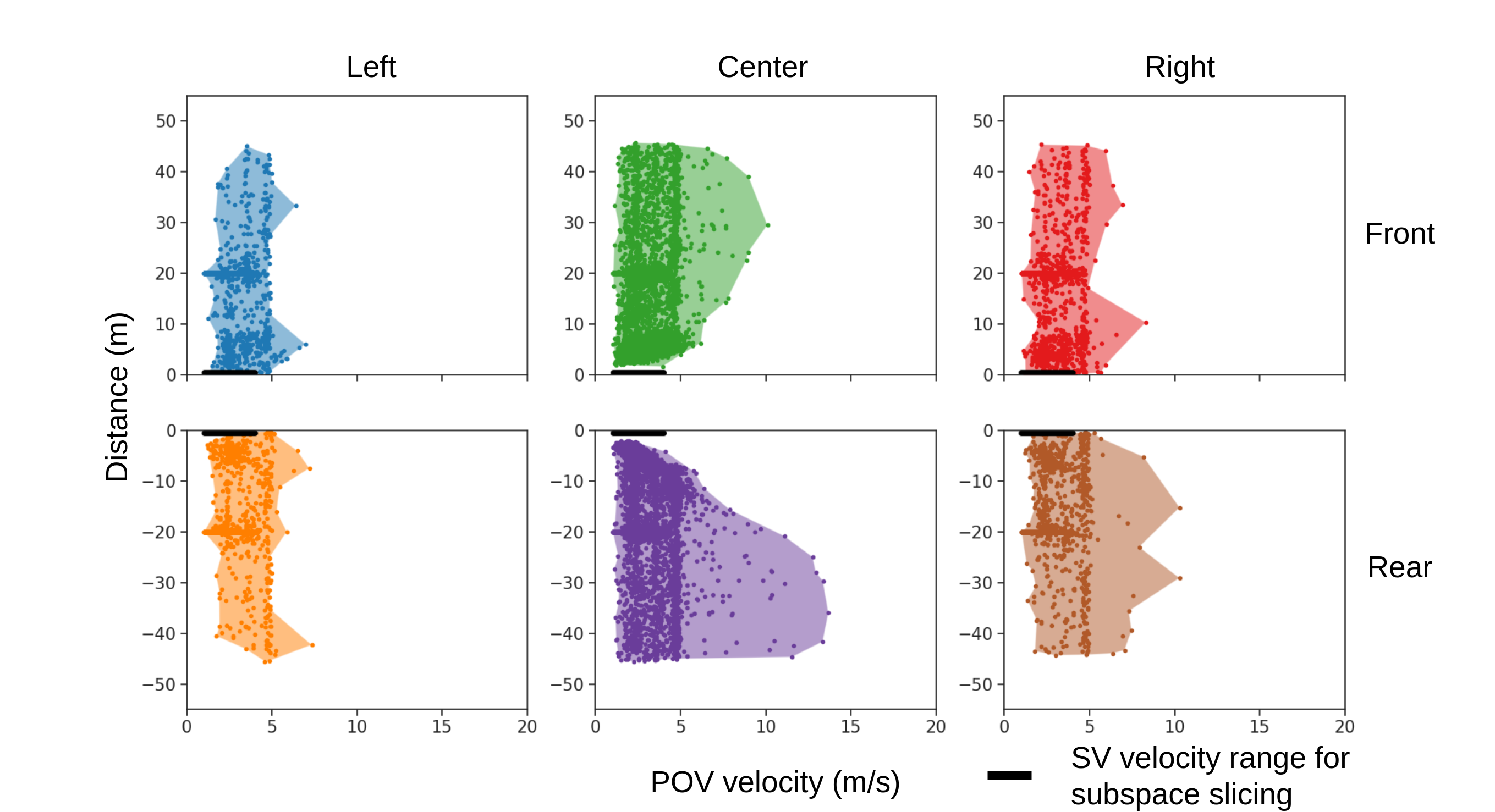}
  \caption{\small{Subspace slicing for $v_0 \in [1,4] \text{ m/s}$ with Carla-Autopilot being the SV.}}
  \label{fig:carla_closedall_lowspeed}
\end{subfigure}
\begin{subfigure}{\linewidth}
  \centering
  \includegraphics[trim={1.cm 0 0.5cm 0cm},clip,width=0.99\textwidth]{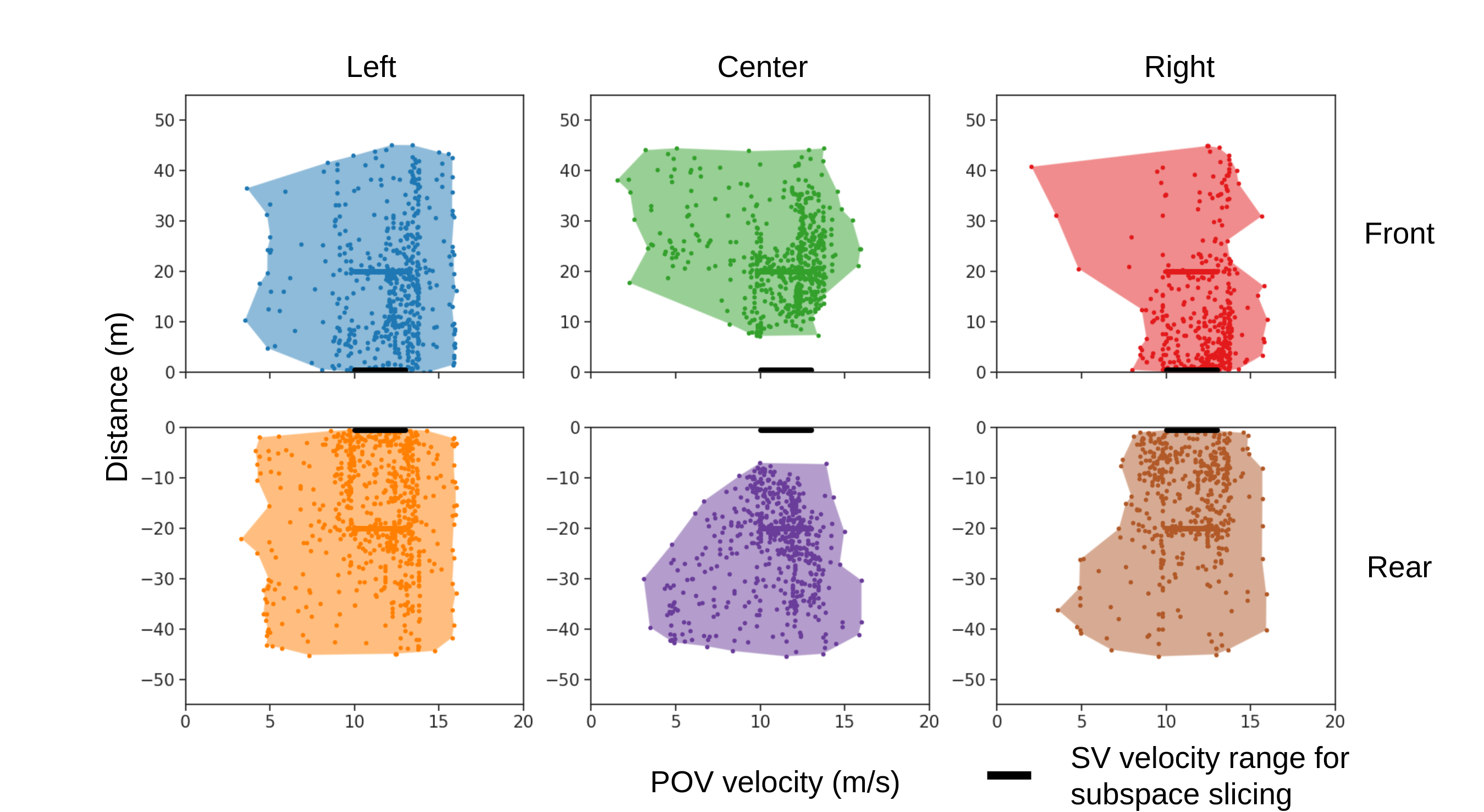}
  \caption{\small{Subspace slicing for $v_0 \in [10,14] \text{ m/s}$ with Carla-Autopilot being the SV.}}
  \label{fig:carla_closedall_highspeed}
\end{subfigure}
\caption{\small{Comparing the subspace slicing of the $\bar{\epsilon}\alpha$-almost safe sets obtained from the Waymo and Carla cases for the multi-vehicle interaction domain with different SV velocity ranges and ignoring the vehicle-pedestrian states.}}
\label{fig:other_closedall}
\vspace{-5mm}
\end{figure}

\begin{table}[b]
    \centering
    % \vspace{2mm}
    \resizebox{0.49\textwidth}{!}{%
    \begin{tabular}{c|c|c||c|c|c}
    \hline
    SV     & Safe Distance (km) & 1-R(C=0.999) & $\bar{\epsilon} (\times 10^{-4})$ (C=0.999) & $\frac{|\mathcal{D}_s|}{|\Phi_{\alpha}(\mathcal{D_s})|}(\times 10^{-24})$  & $\frac{|\Phi_{\alpha}(\mathcal{D})|}{|\mathcal{S}|}$ \\ \hline
    Waymo  & 40.778                            & 0.2386      & 8.8567     & 1.4591        & 0.0096        \\ \hline
    Carla  & 399.195                           & 0.0275      & 0.8060     & 18.3606       & 0.0118        \\ \hline
    \end{tabular}%
    }
    \caption{Safety study of the combined multi-vehicle interaction domain and the vehicle-pedestrian domain in the Waymo and the Carla cases.}
    \label{tab:v2vp}
    % \vspace{-5mm}
\end{table}

\begin{figure}[]
\centering
\vspace{2mm}
\begin{subfigure}{\linewidth}
  \centering
  \includegraphics[trim={2cm 0cm 2cm 1cm},clip,width=0.9\textwidth]{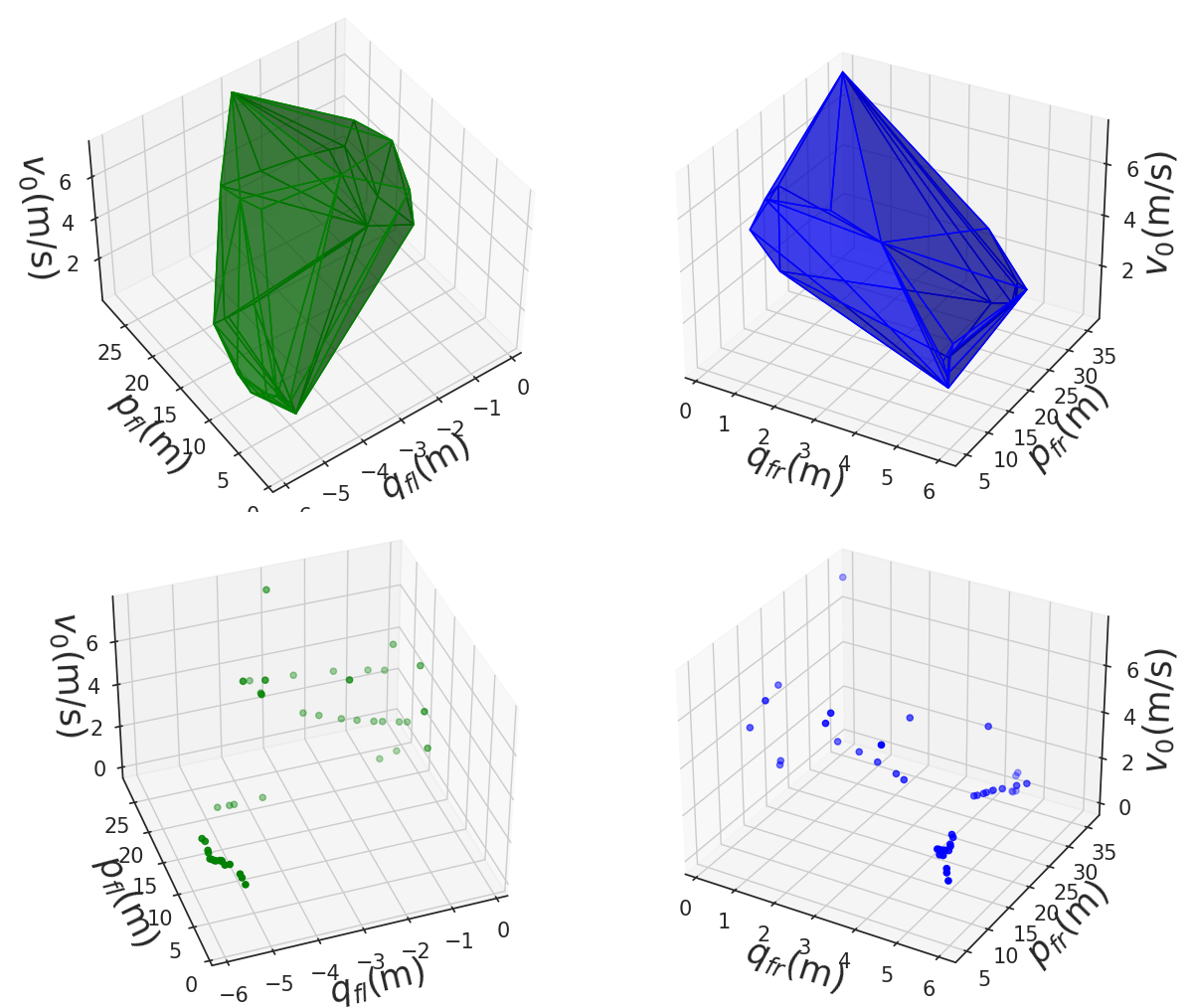}
  \caption{\small{Waymo case.}}
  \label{fig:waymo_ped}
\end{subfigure}
\begin{subfigure}{\linewidth}
  \centering
  \includegraphics[trim={2cm 0cm 2cm 1cm},clip,width=0.9\textwidth]{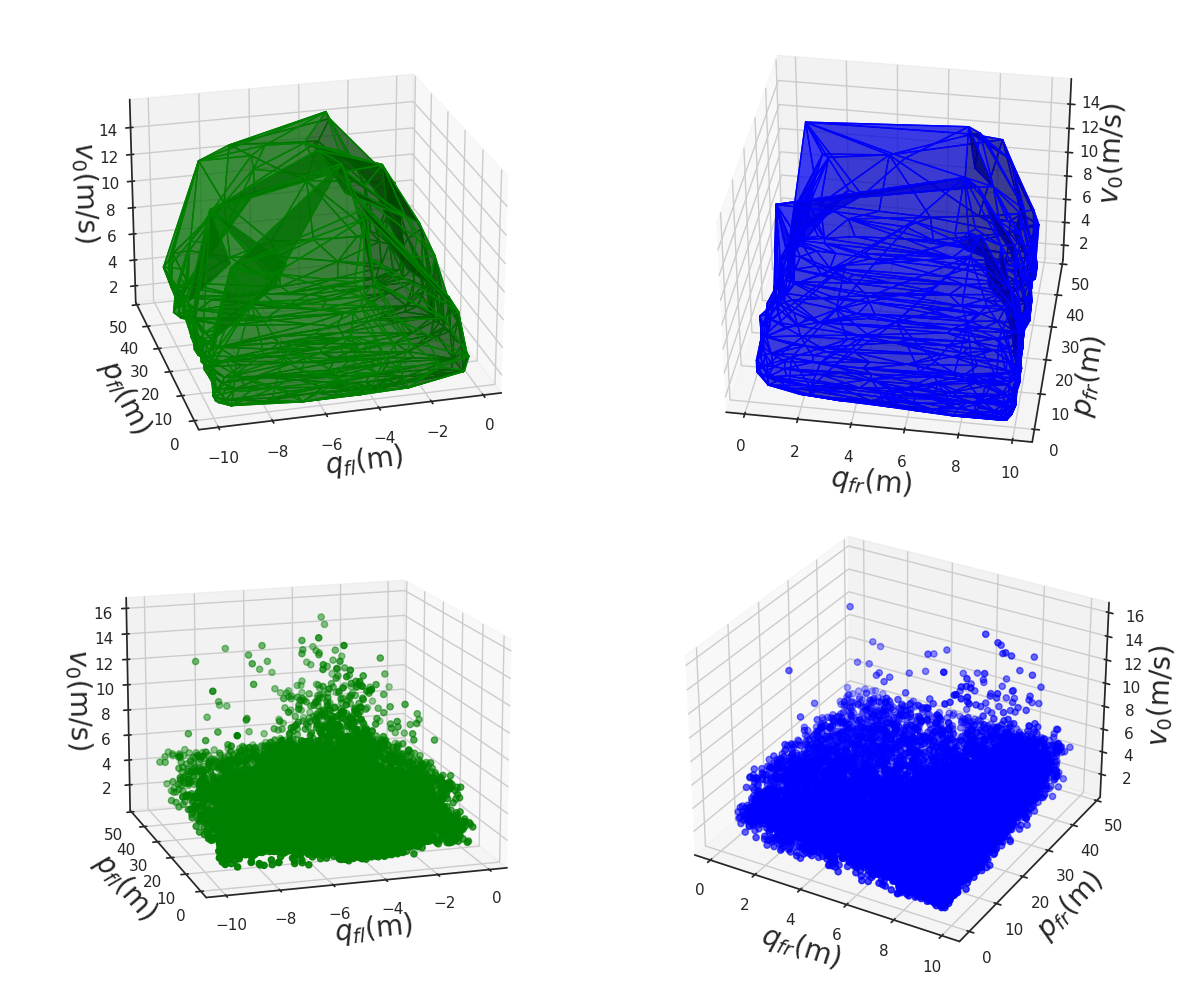}
  \caption{\small{Carla case.}}
  \label{fig:carla_ped}
\end{subfigure}
\caption{\small{Subspace slicing of the vehicle-pedestrian interaction domain: the first row on both sub-figures denotes the obtained $\alpha$-shapes and the second row denotes the points in $\mathcal{D}_s$ within the selected sub-space.}}
\label{fig:ped}
\vspace{-7mm}
\end{figure}

% As we expand the analyses to the Carla and the Waymo cases, 

Within the SV velocity range of $[1,4]$(m/s) (see the center column of Fig.~\ref{fig:waymo_closedall_lowspeed} and Fig.~\ref{fig:carla_closedall_lowspeed}), the Waymo-SDC is more conservative as it maintains a longer following distance. Moreover, the observation also generalizes to the vehicle-pedestrian interaction domain where the Carla-Autopilot exhibits a short vehicle-pedestrian distance within a large velocity range (see Fig.~\ref{fig:ped}). However, note that these comparisons are not necessarily fair as the driving environments are essentially different. 

In comparison with the HighD case, the two analyzed cases have poorer coverage performance, and vehicles are mostly operating at a low speed range given that the driving environments are different. The observation from the HighD case where vehicles on the left run faster is no longer valid as illustrated in Fig.~\ref{fig:other_closedall}. The advantage of having a domain-specific safety analysis can also be shown through the Waymo case in Table~\ref{tab:v2vp}. Limited by the data availability, the total safe travel distance for the Waymo-SDC is short, leading to a large fatality rate (0.2386). In addition, the $\bar{\epsilon}$ value is much smaller for the same confidence level.

\section{Conclusion and Discussions} \label{sec:conclusion}
This paper has presented a novel safety metric that is operational domain specific and provably unbiased for performance evaluation of CAVs involving the $\alpha$-shape and the $\epsilon$-almost robustly forward set invariance property. The performance of the proposed method is also demonstrated over several commonly encountered and challenging ODDs with a variety of data sets collected with different fidelity levels. It is shown, provably and empirically, more accurate than many leading measures, observed and predictive safety lagging measures. In comparison with the inferred fatality rate, the domain-specific nature also customizes a more precise safety assessment property. 

As discussed in Section~\ref{sec:main}, it is of future interest to consider application examples that illustrate the $\bar{\epsilon}\alpha$-almost safe set metric with expanded and richer information related to the dynamic modeling, engagement information, and other safety related features. It is also of practical value to explore more efficient algorithms in deriving the (optimal) $\alpha$-shape.

\bibliographystyle{IEEEtran}
\bibliography{mybibfile}

% Generated by IEEEtran.bst, version: 1.14 (2015/08/26)
\begin{thebibliography}{10}
\providecommand{\url}[1]{#1}
\csname url@samestyle\endcsname
\providecommand{\newblock}{\relax}
\providecommand{\bibinfo}[2]{#2}
\providecommand{\BIBentrySTDinterwordspacing}{\spaceskip=0pt\relax}
\providecommand{\BIBentryALTinterwordstretchfactor}{4}
\providecommand{\BIBentryALTinterwordspacing}{\spaceskip=\fontdimen2\font plus
\BIBentryALTinterwordstretchfactor\fontdimen3\font minus
  \fontdimen4\font\relax}
\providecommand{\BIBforeignlanguage}[2]{{%
\expandafter\ifx\csname l@#1\endcsname\relax
\typeout{** WARNING: IEEEtran.bst: No hyphenation pattern has been}%
\typeout{** loaded for the language `#1'. Using the pattern for}%
\typeout{** the default language instead.}%
\else
\language=\csname l@#1\endcsname
\fi
#2}}
\providecommand{\BIBdecl}{\relax}
\BIBdecl

\bibitem{feng2020testing}
S.~Feng, Y.~Feng, C.~Yu, Y.~Zhang, and H.~X. Liu, ``Testing scenario library
  generation for connected and automated vehicles, part i: Methodology,''
  \emph{IEEE Transactions on Intelligent Transportation Systems}, 2020.

\bibitem{altekar2021infrastructure}
N.~Altekar, S.~Como, D.~Lu, J.~Wishart, D.~Bruyere, F.~Saleem, and K.~L. Head,
  ``Infrastructure-based sensor data capture systems for measurement of
  operational safety assessment (osa) metrics,'' \emph{SAE Technical Papers},
  no. 2021, 2021.

\bibitem{fraade2018measuring}
L.~Fraade-Blanar, M.~S. Blumenthal, J.~M. Anderson, and N.~Kalra,
  \emph{{Measuring automated vehicle safety: Forging a framework}}, 2018.

\bibitem{censi2019liability}
A.~Censi, K.~Slutsky, T.~Wongpiromsarn, D.~Yershov, S.~Pendleton, J.~Fu, and
  E.~Frazzoli, ``Liability, ethics, and culture-aware behavior specification
  using rulebooks,'' in \emph{2019 International Conference on Robotics and
  Automation (ICRA)}.\hskip 1em plus 0.5em minus 0.4em\relax IEEE, 2019, pp.
  8536--8542.

\bibitem{favaro2018autonomous}
F.~Favar{\`o}, S.~Eurich, and N.~Nader, ``{Autonomous vehicles’
  disengagements: Trends, triggers, and regulatory limitations},''
  \emph{Accident Analysis \& Prevention}, vol. 110, pp. 136--148, 2018.

\bibitem{schwall2020waymo}
M.~Schwall, T.~Daniel, T.~Victor, F.~Favaro, and H.~Hohnhold, ``Waymo public
  road safety performance data,'' \emph{arXiv preprint arXiv:2011.00038}, 2020.

\bibitem{bowen2020presentation}
B.~Weng, ``Modeled exploration of proposed safety assessment metrics for
  {ADS},'' 2020, {SAE} Government Industry Meeting.

\bibitem{weng2021model}
------, ``A class of model predictive safety performance metrics for driving
  behavior evaluation,'' in \emph{2021 IEEE International Intelligent
  Transportation Systems Conference (ITSC)}, 2021, pp. 180--187.

\bibitem{wang2021review}
C.~Wang, Y.~Xie, H.~Huang, and P.~Liu, ``A review of surrogate safety measures
  and their applications in connected and automated vehicles safety modeling,''
  \emph{Accident Analysis \& Prevention}, vol. 157, p. 106157, 2021.

\bibitem{lee1976theory}
D.~N. Lee, ``A theory of visual control of braking based on information about
  time-to-collision,'' \emph{Perception}, vol.~5, no.~4, pp. 437--459, 1976.

\bibitem{wishart2020driving}
J.~Wishart, S.~Como, M.~Elli, B.~Russo, J.~Weast, N.~Altekar, E.~James, and
  Y.~Chen, ``Driving safety performance assessment metrics for ads-equipped
  vehicles,'' \emph{SAE Technical Paper}, vol.~2, no. 2020-01-1206, 2020.

\bibitem{shalev2017formal}
S.~Shalev-Shwartz, S.~Shammah, and A.~Shashua, ``On a formal model of safe and
  scalable self-driving cars,'' \emph{arXiv preprint arXiv:1708.06374}, 2017.

\bibitem{every2017novel}
J.~L. Every, F.~Barickman, J.~Martin, S.~Rao, S.~Schnelle, and B.~Weng, ``A
  novel method to evaluate the safety of highly automated vehicles,'' in
  \emph{25th International Technical Conference on the Enhanced Safety of
  Vehicles (ESV) National Highway Traffic Safety Administration, Detroit,
  Michigan}, 2017.

\bibitem{junietz2018criticality}
P.~Junietz, F.~Bonakdar, B.~Klamann, and H.~Winner, ``Criticality metric for
  the safety validation of automated driving using model predictive trajectory
  optimization,'' in \emph{2018 21st International Conference on Intelligent
  Transportation Systems (ITSC)}.\hskip 1em plus 0.5em minus 0.4em\relax IEEE,
  2018, pp. 60--65.

\bibitem{weng2020model}
B.~Weng, S.~J. Rao, E.~Deosthale, S.~Schnelle, and F.~Barickman, ``Model
  predictive instantaneous safety metric for evaluation of automated driving
  systems,'' in \emph{2020 IEEE Intelligent Vehicles Symposium (IV)}.\hskip 1em
  plus 0.5em minus 0.4em\relax IEEE, 2020, pp. 1899--1906.

\bibitem{ding2011toward}
J.~Ding, J.~H. Gillula, H.~Huang, M.~P. Vitus, W.~Zhang, and C.~J. Tomlin,
  ``Toward reachability-based controller design for hybrid systems in
  robotics.''

\bibitem{aasljung2019probabilistic}
D.~{\AA}sljung, M.~Westlund, and J.~Fredriksson, ``A probabilistic framework
  for collision probability estimation and an analysis of the discretization
  precision,'' in \emph{2019 IEEE Intelligent Vehicles Symposium (IV)}.\hskip
  1em plus 0.5em minus 0.4em\relax IEEE, 2019, pp. 52--57.

\bibitem{collin2021plane}
A.~Collin, A.~Bin-Nun, and R.~Duintjer, ``Plane and sample: maximizing
  information about autonomous vehicle performance using submodular
  optimization,'' in \emph{2021 IEEE Intelligent Transportation Systems
  Conference (ITSC)}.\hskip 1em plus 0.5em minus 0.4em\relax IEEE, 2021.

\bibitem{hejase2020methodology}
M.~Hejase, U.~Ozguner, M.~Barbier, and J.~Ibanez-Guzman, ``A methodology for
  model-based validation of autonomous vehicle systems,'' in \emph{2020 IEEE
  Intelligent Vehicles Symposium (IV)}.\hskip 1em plus 0.5em minus 0.4em\relax
  IEEE, 2020, pp. 2097--2103.

\bibitem{akkiraju1995alpha}
N.~Akkiraju, H.~Edelsbrunner, M.~Facello, P.~Fu, E.~Mucke, and C.~Varela,
  ``Alpha shapes: definition and software,'' in \emph{Proceedings of the 1st
  international computational geometry software workshop}, vol.~63, 1995,
  p.~66.

\bibitem{weng2021towards}
B.~Weng, L.~J. Capito~Ruiz, U.~Ozguner, and K.~Redmill, ``Towards guaranteed
  safety assurance of automated driving systems with scenario sampling: An
  invariant set perspective,'' \emph{IEEE Transactions on Intelligent
  Vehicles}, 2021.

\bibitem{weng2021formal}
B.~Weng, L.~Capito, U.~Ozguner, and K.~Redmill, ``A formal characterization of
  black-box system safety performance with scenario sampling,'' \emph{IEEE
  Robotics and Automation Letters}, 2021.

\bibitem{arief2021deep}
M.~Arief, Z.~Huang, G.~K.~S. Kumar, Y.~Bai, S.~He, W.~Ding, H.~Lam, and
  D.~Zhao, ``Deep probabilistic accelerated evaluation: A robust certifiable
  rare-event simulation methodology for black-box safety-critical systems,'' in
  \emph{International Conference on Artificial Intelligence and
  Statistics}.\hskip 1em plus 0.5em minus 0.4em\relax PMLR, 2021, pp. 595--603.

\bibitem{fan2017d}
C.~Fan, B.~Qi, S.~Mitra, and M.~Viswanathan, ``D ry vr: data-driven
  verification and compositional reasoning for automotive systems,'' in
  \emph{International Conference on Computer Aided Verification}.\hskip 1em
  plus 0.5em minus 0.4em\relax Springer, 2017, pp. 441--461.

\bibitem{hauer2020clustering}
F.~Hauer, I.~Gerostathopoulos, T.~Schmidt, and A.~Pretschner, ``Clustering
  traffic scenarios using mental models as little as possible,'' in \emph{2020
  IEEE Intelligent Vehicles Symposium (IV)}.\hskip 1em plus 0.5em minus
  0.4em\relax IEEE, 2020, pp. 1007--1012.

\bibitem{yan2021distributionally}
X.~Yan, S.~Feng, H.~Sun, and H.~X. Liu, ``Distributionally consistent
  simulation of naturalistic driving environment for autonomous vehicle
  testing,'' \emph{arXiv preprint arXiv:2101.02828}, 2021.

\bibitem{alphashape2011}
K.~Fischer, ``Introduction to alpha shapes,'' 2011, unpublished.

\bibitem{kengithub}
B.~Ken, G.~Neil, and K.~Phillip, ``alphashape,''
  \url{https://github.com/bellockk/alphashape}, 2019.

\bibitem{selby1965index}
B.~Selby, ``The index of dispersion as a test statistic,'' \emph{Biometrika},
  vol.~52, no. 3/4, pp. 627--629, 1965.

\bibitem{dang2008sensitive}
T.~Dang, A.~Donz{\'e}, O.~Maler, and N.~Shalev, ``Sensitive state-space
  exploration,'' in \emph{2008 47th IEEE Conference on Decision and
  Control}.\hskip 1em plus 0.5em minus 0.4em\relax IEEE, 2008, pp. 4049--4054.

\bibitem{capito2021bpa}
L.~Capito, K.~Redmill, and U.~Ozguner, ``{Model-based decomposition and
  backtracking framework for probabilistic risk assessment in automated vehicle
  systems},'' in \emph{2021 International Topical Meeting on Probabilistic
  Safety Assessment and Analysis, PSA 2021}.\hskip 1em plus 0.5em minus
  0.4em\relax American Nuclear Society, 2021.

\bibitem{krajewski2018highd}
R.~Krajewski, J.~Bock, L.~Kloeker, and L.~Eckstein, ``The highd dataset: A
  drone dataset of naturalistic vehicle trajectories on german highways for
  validation of highly automated driving systems,'' in \emph{2018 21st
  International Conference on Intelligent Transportation Systems (ITSC)}.\hskip
  1em plus 0.5em minus 0.4em\relax IEEE, 2018, pp. 2118--2125.

\bibitem{sun2020scalability}
P.~Sun, H.~Kretzschmar, X.~Dotiwalla, A.~Chouard, V.~Patnaik, P.~Tsui, J.~Guo,
  Y.~Zhou, Y.~Chai, B.~Caine \emph{et~al.}, ``Scalability in perception for
  autonomous driving: Waymo open dataset,'' in \emph{Proceedings of the
  IEEE/CVF Conference on Computer Vision and Pattern Recognition}, 2020, pp.
  2446--2454.

\bibitem{SUMO2018}
\BIBentryALTinterwordspacing
P.~A. Lopez, M.~Behrisch, L.~Bieker-Walz, J.~Erdmann, Y.-P. Fl{\"o}tter{\"o}d,
  R.~Hilbrich, L.~L{\"u}cken, J.~Rummel, P.~Wagner, and E.~Wie{\ss}ner,
  ``Microscopic traffic simulation using sumo,'' in \emph{The 21st IEEE
  International Conference on Intelligent Transportation Systems}.\hskip 1em
  plus 0.5em minus 0.4em\relax IEEE, 2018. [Online]. Available:
  \url{https://elib.dlr.de/124092/}
\BIBentrySTDinterwordspacing

\bibitem{treiber2013traffic}
M.~Treiber and A.~Kesting, ``Traffic flow dynamics,'' \emph{Traffic Flow
  Dynamics: Data, Models and Simulation, Springer-Verlag Berlin Heidelberg},
  2013.

\bibitem{dosovitskiy2017carla}
A.~Dosovitskiy, G.~Ros, F.~Codevilla, A.~Lopez, and V.~Koltun, ``Carla: An open
  urban driving simulator,'' in \emph{Conference on robot learning}.\hskip 1em
  plus 0.5em minus 0.4em\relax PMLR, 2017, pp. 1--16.

\bibitem{van2017euro}
M.~R. van Ratingen, ``The {EURO} {NCAP} safety rating,'' in
  \emph{Karosseriebautage Hamburg 2017}.\hskip 1em plus 0.5em minus 0.4em\relax
  Springer, 2017, pp. 11--20.

\end{thebibliography}
\end{document}